\title{Generative Modeling with Phase Stochastic Bridges}
\def\eqref#1{(\ref{#1})}
\def\1{\bm{1}}
\def\eps{{\epsilon}}
\def\rd{{\textnormal{d}}}
\def\rva{{\mathbf{a}}}
\def\rvf{{\mathbf{f}}}
\def\rvg{{\mathbf{g}}}
\def\rvu{{\mathbf{i}}}
\def\rvm{{\mathbf{m}}}
\def\rvr{{\mathbf{r}}}
\def\rvs{{\mathbf{s}}}
\def\rvu{{\mathbf{u}}}
\def\rvv{{\mathbf{v}}}
\def\rvw{{\mathbf{w}}}
\def\rvx{{\mathbf{x}}}
\def\rvz{{\mathbf{z}}}
\def\rmD{{\mathbf{D}}}
\def\rmF{{\mathbf{F}}}
\def\rmI{{\mathbf{I}}}
\def\rmL{{\mathbf{L}}}
\def\rmM{{\mathbf{M}}}
\def\rmP{{\mathbf{P}}}
\def\rmR{{\mathbf{R}}}
\def\vg{{\bm{g}}}
\def\mI{{\bm{I}}}
\DeclareMathAlphabet{\mathsfit}{\encodingdefault}{\sfdefault}{m}{sl}
\SetMathAlphabet{\mathsfit}{bold}{\encodingdefault}{\sfdefault}{bx}{n}
\def\gN{{\mathcal{N}}}
\newcommand{\pdata}{p_{\rm{data}}}
\newcommand{\E}{\mathbb{E}}
\DeclareMathOperator*{\argmin}{arg\,min}
\newcommand{\norm}[1]{\lVert#1\rVert}
\newcommand{\T}{\mathsf{T}}
\def\bzero{{\mathbf{0}}}
\def\bmut{{\boldsymbol{\mu}_t}}
\def\bmu{{\boldsymbol{\mu}}}
\def\bSigmat{{\boldsymbol{\Sigma}_t}}
\def\bSigma{{\boldsymbol{\Sigma}}}
\def\Sigxxt{{\Sigma^{xx}_t}}
\def\Sigxvt{{\Sigma^{xv}_t}}
\def\Sigvvt{{\Sigma^{vv}_t}}
\def\Lxxt{{L^{xx}_t}}
\def\Lxvt{{L^{xv}_t}}
\def\Lvvt{{L^{vv}_t}}
\def\muxt{{\mu^{x}_t}}
\def\muvt{{\mu^{v}_t}}
\def\epsz{{\boldsymbol{\epsilon}_0}}
\def\epso{{\boldsymbol{\epsilon}_1}}
\def\eps{{\boldsymbol{\epsilon}}}
\newtheorem{theorem}{Theorem}
\newtheorem{lemma}[theorem]{Lemma}
\newtheorem{proposition}[theorem]{Proposition}
\newtheorem{definition}[theorem]{Definition}
\newtheorem{remark}[theorem]{Remark}
\colorlet{color1}{green!50!black}
\colorlet{color2}{orange!95!black}
\colorlet{color3}{red!80!black}
\colorlet{color4}{red!65!black}
\colorlet{color5}{blue!75!green}
\colorlet{blueee}{blue!50!black}
\definecolor{amaranth}{rgb}{0.9, 0.17, 0.31}
\newcommand{\markgreen}[1]{{\ignorespaces\color{color1} #1}}
\let\oldsqrt\sqrt
\def\sqrt{\mathpalette\DHLhksqrt}
\def\DHLhksqrt#1#2{\setbox0=\hbox{$#1\oldsqrt{#2\,}$}\dimen0=\ht0
\advance\dimen0-0.2\ht0
\setbox2=\hbox{\vrule height\ht0 depth -\dimen0}%
{\box0\lower0.4pt\box2}}
\newcommand*\widefbox[1]{\fbox{\hspace{1em}#1\hspace{1em}}}
\author{Tianrong Chen$^1\thanks{work done while Tianrong Chen is an intern at Apple}$,
Jiatao Gu$^2$,
Laurent Dinh$^2$,
Evangelos A. Theodorou$^1$\\ 
\textbf{Joshua Susskind}$^2$,
\textbf{Shuangfei Zhai}$^2$ \\
$^1$Georgia Tech, $^2$Apple \\
\{tianrong.chen, evangelos.theodorou\}@gatech.edu, \{jgu32,l\_dinh,jsusskind,szhai\}@apple.com
}
\begin{document}

\maketitle

\begin{abstract}
    We introduce a novel generative modeling framework grounded in phase space dynamics, taking inspiration from the principles underlying Critically damped Langevin Dynamics and Bridge Matching. Leveraging insights from Stochastic Optimal Control, we construct a more favorable path measure in the phase space that is highly advantageous for efficient sampling. A distinctive feature of our approach is the early-stage data prediction capability within the context of propagating generative Ordinary Differential Equations or Stochastic Differential Equations. This early prediction, enabled by the model's unique structural characteristics, sets the stage for more efficient data generation, leveraging additional velocity information along the trajectory. This innovation has spurred the exploration of a novel avenue for mitigating sampling complexity by quickly converging to realistic data samples. Our model yields comparable results in image generation and notably outperforms baseline methods, particularly when faced with a limited Number of Function Evaluations. Furthermore, our approach rivals the performance of diffusion models equipped with efficient sampling techniques, underscoring its potential in the realm of generative modeling. Code is available at \url{https://github.com/apple/ml-agm}.
\end{abstract}

\section{Introduction}
Diffusion Models (DMs;\cite{song2020denoising,ho2020denoising}) constitute an instrumental technique in generative modeling, which formulate a particular Stochastic Differential Equation (SDE) linking the data distribution with a tractable prior distribution. Initially, a DM diffuses data towards the prior distribution via a predetermined linear SDE. In order to reverse the process, a neural network is used to approximate the score function which is analytically available. Subsequently, the approximated score  is utilized to conduct time reversal \citep{anderson1982reverse,haussmann1986time} of this diffusion process, ultimately generating data. Recently, the Critical-damped Langevin Dynamics (CLD;\cite{dockhorn2021score}) extends the SDE framework of DM into phase space (whereas DMs operate in the position space) by introducing an auxiliary velocity variable, which is defined by tractable Gaussian distributions at the initial and terminal time steps. This augmentation induces a trajectory in position space exhibiting enhanced smoothness, as stochasticity is solely introduced into the velocity space. The distinctive structure of CLD is shown to enhance the empirical performance and sample efficiency. However, despite the success of CLD, inefficient sampling still persists due to unnecessary curvature of the dynamics (Fig.\ref{fig:cld-am-traj}) as it has to converge to equilibrium for sampling from the tractable prior.

The remarkable accomplishments of DM have also catalyzed recent advancements in generative modeling, leading to the development of Bridge Matching (BM;\citep{peluchetti2021non,liu2022let,liu20232}) and Flow Matching (FM;models\citep{lipman2022flow}). These models leverage dynamic transport maps underpinned by the utilization of SDEs or ODEs. Unlike DM, Bridge and Flow Matching relaxes the reliance on a forward diffusion process with an asymptotic convergence to a prior distribution over an infinite time horizon. Moreover, they exhibit a heightened degree of versatility, enabling the construction of transport maps between two arbitrary distributions by drawing upon insights from domains such as optimal transport \citep{pooladian2023multisample}, normalizing flow \citep{tong2023improving}, and optimal control \citep{liu20232}.

In this paper, we focus on enhancing the sample efficiency of velocity based generative modeling (eg, CLD) by utilizing the Stochastic Optimal Control (SOC) theory. Specifically, we leverage the outcomes of stochastic bridge within the context of linear momentum systems \citep{chen2015stochastic} to construct a path measure bridging the data and prior distribution. The resulting path exhibits a more straight position and velocity trajectory compared to CLD (fig.\ref{fig:cld-am-traj}), making it more amenable to efficient sampling. Within the broader landscape of dynamic generative modeling (ie, ODE/SDE based generative models), data point can often be represented as linear combinations of scaled intermediate data of dynamics and Gaussian noise. In our work, we re-establish this property, enabling the estimation of target data points by leveraging both \emph{state and velocity} information. In the case of DM and FM, the estimation of target data is exclusively reliant on position information, whereas our method incorporates the additional dimension of velocity data, enhancing the precision and comprehensiveness of our estimations. It is also worth noting that our model exhibits the capacity to generate high fidelity images at early time steps (fig.\ref{fig:sampling-hop}). In addition, we propose a sampling technique which demonstrates competitive results with small Number of Function Evaluations (NFEs), eg, 5 to 10. Table.\ref{table:model-diff} demonstrates the design differences among aforementioned models. 
In summary, our paper presents the following contributions:
\begin{enumerate}[leftmargin=0.5cm]
    \item We propose Acceleration Generative Modeling (AGM) which is built on the SOC theory, enabling the favorable trajectories for efficient sampling over 2nd-order momentum dynamics generative modeling such as CLD.
    \item As a result of AGM structural characteristics, it becomes possible to estimate a realistic data point at an early time point, a concept we refer to as sampling-hop. This approach not only yields a significant reduction in sampling complexity but also offers a novel perspective on accelerating the sampling in generative modeling by leveraging additional information from the dynamics.
    \item We achieve competitive results compared to DM approaches equipped with specifically designed fast sampling techniques on image datasets, particularly in small NFE settings.
\end{enumerate}
\begin{figure}[t]
    \hspace*{-0.5cm} 
    \includegraphics[width=1.05\textwidth]{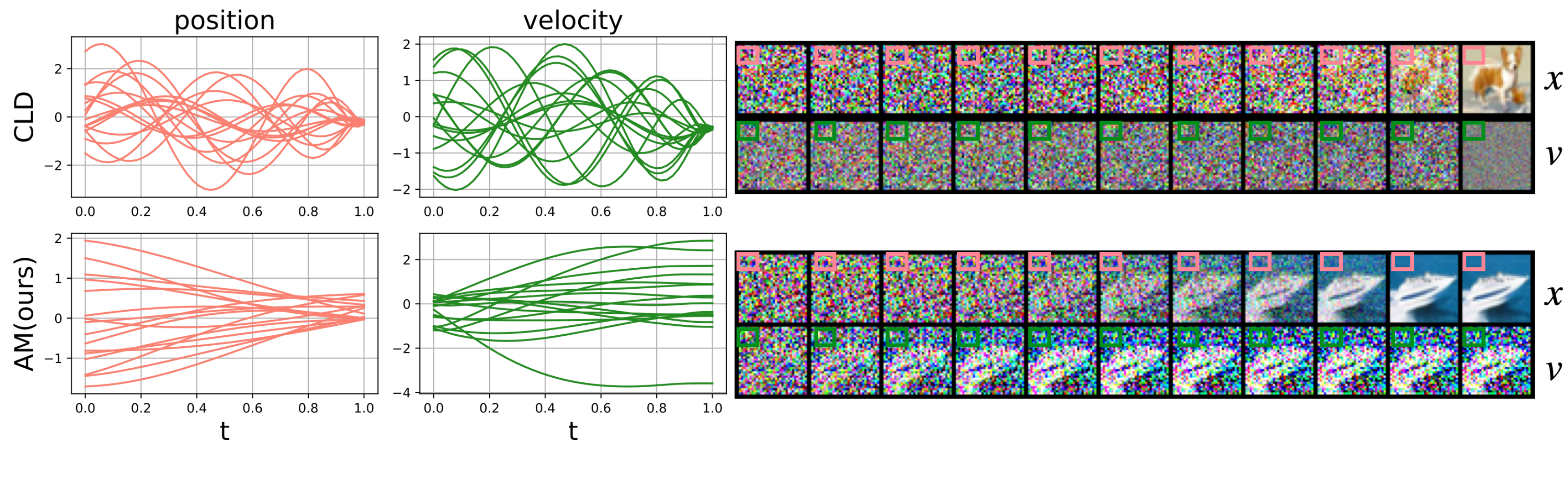}
    \caption{The pixel-wise trajectories comparison with CLD\citep{dockhorn2021score}. Left figures correspond to the trajectories over time w.r.t random sampled 16 pixels, for position and velocity. Our model is able to learn straighter trajectories which is beneficial for reducing sampling complexity.}
    \label{fig:cld-am-traj}
    \vskip -0.2in
\end{figure}
\section{Preliminary}
\textbf{Notation}: Let $\rvx_t \in \mathbb{R}^{d}$ and $\rvv_t\in \mathbb{R}^{d}$ denote the $d$-dimensional position and velocity variable of a particle $\rvm_t = [\rvx_t,\rvv_t]^\T \in \mathbb{R}^{2d}$ at time $t$. We denote the discretized time series as $0\leq t_0<...t_n<t_N<1$. The Wiener Process is denoted as $\rvw_t$. The identity matrix is denoted as $\rmI_{d}\in\mathbb{R}^{d\times d}$. We define $\bSigmat$ as the covariance matrix of $\rvx_t$ and $\rvv_t$ at time step $t$.
\subsection{Dynamical Generative Modeling}
The generative modeling approaches rooted in dynamical systems, including ODE and SDE, have garnered significant attention. Here, we present three noteworthy dynamical generative models: Diffusion Model (DM), Flow Matching (FM) and Bridge Matching (BM).

\begin{table}[t]
    \caption{Comparison between models in terms of boundary distributions $p_0$ and $p_1$. Our AGM generalizes DM beyond Gaussian priors to phase space, similar to CLD. However, unlike CLD, AGM does not need to converge to the Gaussian at equilibrium which causes curved trajectory(see Fig.\ref{fig:cld-am-traj}), instead, velocity distribution will be the convolution of data distribution with Gaussian.
    }\label{table:model-diff}
    \begin{center}
        \begin{tabular}{ccccc}
            \toprule
            Models &  DM/FM & CLD& AGM(ours)  \\[1pt]
            \hline
            $p_0(\cdot)$  & $\pdata(x)$ & $\pdata(x) \times \mathcal{N}(\mathbf{0},\mI_d)$&$\mathcal{N}(\mathbf{0},\bSigma_0 \times \mI_{2d})$ \\[1pt]
            $p_1(\cdot)$  & $\gN(\bzero,\mI_d)$ &$\mathcal{N}(\mathbf{0},\mI_d) \times \mathcal{N}(\mathbf{0},\mI_d)$  & $\pdata(x) \times \pdata(x)*\mathcal{N}(\mathbf{0},\bm{\Sigma}_1\otimes \mI_{2d})$ \\[1pt]
            \bottomrule
        \end{tabular}
    \end{center}
\end{table}
\textbf{Diffusion Model:} In the framework of DM, given $\rvx_0$ drawn from a data distribution $\pdata$, the model proceeds to construct a SDE,
\begin{align}\label{eq:diffusion-sde}
    \rd \rvx_t=f_t(\rvx_t)\rd t +g(t)\rd\rvw_t \quad \rvx_0\sim \pdata(\rvx)
\end{align}
whose terminal distributions at $t = 1$ approach an approximate Gaussian, i.e. $\rvx_1\sim\mathcal{N}(\bzero,\mI_d)$. This accomplishment is realized through the careful selection of the diffusion coefficient $g_t$ and the base drift $f_t(\rvx_t)$. It is noteworthy that the time-reversal \citep{anderson1982reverse} of (\ref{eq:diffusion-sde}) results in another SDE:
\begin{align}\label{eq:rev-diffusion-sde}
    \rd \rvx_t=\left[f_t(\rvx_t)-g_t^2\nabla_{\rvx} \log p (\rvx_t,t)\right]\rd t +g(t)\rd\rvw_t, \quad \rvx_1 \sim \mathcal{N}(\bzero,\rmI_d)
\end{align}
where $p(\cdot,t)$ is the marginal density of (\ref{eq:diffusion-sde}) at time $t$ and $\nabla_{\rvx} \log p_t$ is known as the score function. SDE (\ref{eq:rev-diffusion-sde}) can be regarded as the time-reversal of (\ref{eq:diffusion-sde}) in such a manner that the path-wise measure is almost surely equivalent to the one induced by (\ref{eq:diffusion-sde}). As a consequence, these two SDEs share identical marginal over time. In practice, it is feasible to analytically sample $\rvx_t$ given $t$ and $\rvx_0$. Additionally, we can leverage a neural network to learn the score function by regressing scaled Stein Score $\E_{\rvx_t,t}\norm{\rvs_t^{\theta}(\rvx_t,t;\theta)-\nabla_{\rvx}\log p(\rvx_t,t|\rvx_0)}_2^2$ for the purpose of propagating (\ref{eq:rev-diffusion-sde}). This learned score can then be integrated into the solution of the aforementioned SDE(\ref{eq:rev-diffusion-sde}) to simulate the generation of data that adheres to the target data distribution from the prior distribution. Meanwhile, (\ref{eq:rev-diffusion-sde}) also corresponds to an ODE which shares the same path-wise measure:
\begin{align}\label{eq:diffusion-ode}
    \rd \rvx_t=\left[f_t(\rvx_t)-\frac{1}{2}g_t^2\nabla_{\rvx} \log p (\rvx_t,t)\right]\rd t, \quad \rvx_1 \sim \mathcal{N}(\bzero,\rmI_d)
\end{align}
which motivates the popular sampler introduced in \citep{zhang2022fast,zhang2022gddim,bao2022analytic} to solve the ODE (\ref{eq:rev-diffusion-sde}) efficiently.

\textbf{Bridge Matching and Flow Matching:} An alternative approach to exploring the time-reversal of a forward noising process involves the concept of 'building bridges' between two distinct distributions $p_0(\cdot)$ and $p_1(\cdot)$. This method entails the learning of a mimicking diffusion process, commonly referred to as bridge matching, as elucidated in previous works \citep{peluchetti2021non,shi2022conditional}. Here we consider the SDE in the form of:
\begin{align}\label{eq:bridge-matching}
    \rd \rvx_t=\rvv_t(\rvx,t)\rd t+g_t \rd \rvw_t \quad s.t. \quad (x_0,x_1)\sim\Pi_{0,1}(\rvx_0,\rvx_1):=p_0 \times p_1
\end{align}
which is pinned down at an initial and terminal point $x_0,x_1$ which are independently samples from predefined $p_0$ and $p_1$. This is commonly known as the reciprocal projection of $x_0$ and $x_1$ in the literature \citep{shi2023diffusion,peluchetti2023diffusion,liu2022let,leonard2014reciprocal}. The construction of such SDE is accomplished by meticulous design of $\rvv_t$. A widely adopted choice for $\rvv_t$ is $\rvv_t := (\rvx_1 - \rvx_t) / (1 - t)$, which induces the well-known Brownian Bridge \citep{liu20232,somnath2023aligned}.  Similar to the approach in DM and owing to the linear structure of the dynamics, one can efficiently estimate this drift by employing a neural network parameterized by weights $\theta$ for regression on: $\E_{\rvx_t,t}\norm{\rvv_t^{\theta}(\rvx_t,t;\theta)-\rvv_t(\rvx_t,t)}_2^2$ given $\rvx_1$ and $t$. As extensively discussed in previous studies \citep{liu20232,shi2022conditional}, this bridge matching framework takes on the characteristics of FM \citep{lipman2022flow} when the diffusion coefficient $g_t$ tends to zero.

\begin{remark}
    The practice of constraining a stochastic process to specific initial and terminal conditions is a well-established setup in SOC. For a gentle introduction of it's connection with Brownian Bridge, Schrödinger Bridge please see Appendix.\ref{appendix:SOC-intro}. From this perspective, one can derive Brownian Bridge, as elaborated in Appendix.\ref{appendix:SOC-BB} for comprehensive elucidation. It is imperative to note that the SOC framework will serve as the fundamental basis upon which we will develop our algorithm.
\end{remark}

\section{Acceleration Generative Model}\label{sec:method}
We apply SOC to characterize the twisted trajectory of momentum dynamics induced by CLD\citep{dockhorn2021score}.
It becomes evident that the mechanisms encompassing flow matching, diffusion modeling, and Bridge matching collectively facilitate the construction of an estimated target data point, denoted as $\rvx_1$, by utilizing the intermediate state of the dynamics, $\rvx_t$. Our additional objective is to expedite the estimation of a plausible $\rvx_1$ by incorporating additional dynamics-related information, such as velocity, thereby curtailing the requisite time integration.

In this section, we introduce the proposed method, termed as the Acceleration Generative Model (AGM), rooted in SOC theory. Building upon \citep{chen2015stochastic}, we extend the framework by incorporating a time-varying diffusion coefficient and accommodating arbitrary boundary conditions, ultimately arriving at an analytical solution suited for the generative modeling. We demonstrate its efficacy in rectifying the trajectory of CLD, concurrently showcasing its aptitude for accurately estimating the target data at an early timestep $t_i$, thereby enabling expeditious sampling.

As suggested by BM approach, there is a necessity to formulate a trajectory that bridges the two data points sampled from $p_0$ and $p_1$ respectively. Desirably, the intermediate trajectory should exhibit optimal characteristics that facilitate smoothness and linearity. This is essential for the ease of simulating the dynamics system to obtain the solution. In our endeavor to tackle this challenge and enhance the estimation of the data point $\rvx_1$ by incorporating velocity components, we encapsulate the problem within a SOC framework, specifically formulated in the phase space which reads:
\begin{definition}[Stochastic Bridge problem of linear momentum system \citep{chen2015stochastic}]
    \begin{equation}\label{eq:stochastic-bridge}
        \begin{aligned}
            \min_{\rva_t}\int_{\tau}^{1}\norm{\rva_t}_2^2\rd t+(\rvm_1-m_1)^\T \rmR(\rvm_1-m_1) \quad & s.t \underbrace{\begin{bmatrix}
                \rd \rvx_t\\
                \rd \rvv_t
            \end{bmatrix}}_{\rd \rvm_t}=\underbrace{\begin{bmatrix}
                \rvv_t\\
                \rva_t(\rvx_t,\rvv_t,t)
            \end{bmatrix}}_{\rvf(\rvm,t)}\rd t+\underbrace{\begin{bmatrix}
                \bzero&\bzero\\
                \bzero&g_t
            \end{bmatrix}}_{\rvg_t}\rd \rvw_t,\\ 
            \rvm_\tau:=\begin{bmatrix}
                \rvx_\tau\\
                \rvv_\tau
            \end{bmatrix} = \begin{bmatrix}
                x_\tau\\
                v_\tau
            \end{bmatrix}&,
            \quad  
            \rmR=\begin{bmatrix}
                \rvr&\bzero\\
                \bzero&\rvr
            \end{bmatrix}\otimes\mI_{d},\quad
            x_1 \sim \pdata.
        \end{aligned}
    \end{equation}
\end{definition}
In this context, the matrix $\rmR$ is recognized as the terminal cost matrix, serving to assess the proximity between the propagated $\rvm_1$ and the ground truth $m_1$ at the terminal time $t=1$. As the parameter $\rvr$ approaches positive infinity, the trajectory converges toward the state $x_1$, prompting a transition to constrained dynamics wherein the system becomes constrained by two predetermined boundaries, namely $m_0$ and $m_1$. This configuration aligns seamlessly with the principles of constructing a feasible bridge, as advocated by the tenets of BM. It is worth noting that this interpolation approach essentially represents a natural extension \citep{chen2015stochastic} of the well-established concept of the Brownian Bridge \citep{revuz2013continuous}, which has been employed in trajectory inference \citep{somnath2023aligned,tong2023simulation} and image inpainting tasks \citep{liu20232} and its connection with Diffusion has been discussed in \cite{liu20232}. Indeed, it is evident that the target velocity lacks a precise definition within this problem, allowing for flexibility in the design space for our approach. To address this, we opt for the linear interpolation of the intermediate point and the target point, represented as $\rvv_1 = (\rvx_1 - \rvx_t)/(1 - t)$, as the chosen terminal velocity, which also is the optimal control in the original space (see Appendix..\ref{appendix:SOC-BB}). This choice is made due to its ability to construct a trajectory characterized by straightness. Conceptually, the acceleration $\rva_t$ continually guides the dynamics towards the linear interpolation of the two data points, serving to mitigate the impact of introduced stochasticity. In contrast to previous bridge matching frameworks, the velocity's boundary condition in our approach \emph{varies over time} since it depends on the state $\rvx_t$ and $t$. The velocity variable serves solely as an auxiliary component aimed at straightening the trajectories. Regarding this SOC problem formulation, the solution is,
\begin{proposition}[Phase Space Brownian Bridge]\label{prop:phase-optimal-control}
    When $\rvr\rightarrow +\infty$, The solution w.r.t optimization problem \ref{eq:stochastic-bridge} is,
    \begin{align}\label{eq:soc-solution}
        \rva^{*}(\rvm_t,t)=g_t^2P_{11}\left(\frac{\rvx_1-\rvx_t}{1-t}-\rvv_t\right) \quad \text{where}: \quad 
        P_{11}=\frac{-4}{g_t^2(t-1)}.
    \end{align} 
  \end{proposition}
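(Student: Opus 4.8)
Problem~\eqref{eq:stochastic-bridge} is a linear--quadratic--Gaussian (LQG) control problem: the drift is affine, $\rvf(\rvm_t,\rva_t)=A\rvm_t+B\rva_t$ with $A$ the nilpotent block matrix sending $(\rvx_t,\rvv_t)\mapsto(\rvv_t,\bzero)$ and $B=\begin{bmatrix}\bzero\\ \mI_d\end{bmatrix}$, the running cost $\|\rva_t\|_2^2$ and terminal cost $(\rvm_1-m_1)^{\T}\rmR(\rvm_1-m_1)$ are quadratic, and the noise $\rvg_t\,\rd\rvw_t$ is additive and confined to the velocity channel. I would solve it by dynamic programming: write the HJB equation for the cost-to-go $V(\rvm,t)$, perform the pointwise minimization over $\rva_t$ --- since $\rva_t$ multiplies only $\nabla_{\rvv}V$ and the cost is $\|\rva_t\|_2^2$, the minimizer is $\rva^{*}=-\tfrac12\nabla_{\rvv}V$ --- and substitute back. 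The resulting viscous HJB has quadratic terminal data, so the ansatz $V(\rvm,t)=\rvm^{\T}P_t\rvm+\rvb_t^{\T}\rvm+c_t$ is self-consistent and collapses the PDE to a matrix Riccati ODE for $P_t$ with terminal condition $P_1=\rmR$ (the Laplacian term feeds only the scalar $c_t$ and does not enter the feedback, which has the form $\rva^{*}=-B^{\T}P_t\rvm+\text{affine}$). Because the cost is quadratic and the noise additive, certainty equivalence applies: the optimal feedback equals that of the deterministic problem and is hence independent of $g_t$, already consistent with $g_t^2 P_{11}=-4/(t-1)$ being $g_t$-free.

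\textbf{The $\rvr\to\infty$ limit.} Sending $\rvr\to+\infty$ makes $P_1=\rmR$ diverge, which pins the terminal state to the target, $\rvm_1=m_1$; this is exactly the mechanism by which a penalized endpoint degenerates into a Brownian bridge. The delicate point --- and the step I expect to be the main obstacle --- is that $P_t$ itself blows up as $t\uparrow 1$ (steering exactly to $m_1$ in vanishing time is infinitely costly), so one must argue the feedback law still has a well-defined finite limit for each $t<1$. The cleanest way to do this, which also avoids having to interpret the terminal cost at $t=1$ where the chosen terminal velocity is singular, is to observe that in the pinned limit the optimal drift correction coincides with the Doob $h$-transform of the uncontrolled linear SDE $\rd\rvm=A\rvm\,\rd t+\rvg_t\,\rd\rvw_t$, namely $\rvg_t\rvg_t^{\T}\nabla_{\rvm}\log p(m_1\mid\rvm_t)$, whose velocity block is precisely the admissible control $\rva$ (the position block is annihilated by $\rvg_t\rvg_t^{\T}$). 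Since the reference SDE is linear, $p(m_1\mid\rvm_t)=\mathcal{N}\!\big(\Phi(1,t)\rvm_t,\ \Sigma_{t,1}\big)$ is Gaussian, with $\Phi(1,t)$ the transition matrix of the noiseless double integrator (block upper-triangular, $\mI_d$ on the diagonal and $(1-t)\mI_d$ in the top-right block) and $\Sigma_{t,1}=\int_t^1\Phi(1,s)\,\rvg_s\rvg_s^{\T}\,\Phi(1,s)^{\T}\,\rd s$ available in closed form.

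\textbf{Closing the computation.} It then remains to: (i) invert the $2\times 2$ block matrix $\Sigma_{t,1}$ (routine); (ii) form $\nabla_{\rvm}\log p(m_1\mid\rvm_t)=\Phi(1,t)^{\T}\Sigma_{t,1}^{-1}\big(m_1-\Phi(1,t)\rvm_t\big)$ and keep its velocity block, so that $\rva^{*}=g_t^2\big[\Phi(1,t)^{\T}\Sigma_{t,1}^{-1}(m_1-\Phi(1,t)\rvm_t)\big]_{\rvv}$; and (iii) substitute the chosen terminal velocity $\rvv_1=(\rvx_1-\rvx_t)/(1-t)$. After this substitution the residual factorizes, $m_1-\Phi(1,t)\rvm_t=\begin{bmatrix}(1-t)\mI_d\\ \mI_d\end{bmatrix}\!\big(\tfrac{\rvx_1-\rvx_t}{1-t}-\rvv_t\big)$, and the block product collapses to a scalar multiple of the velocity residual $\big(\tfrac{\rvx_1-\rvx_t}{1-t}-\rvv_t\big)$; identifying that scalar with $g_t^2 P_{11}$ yields $P_{11}=-4/\big(g_t^2(t-1)\big)$, as claimed. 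As an independent cross-check I would run Pontryagin's principle on the deterministic pinned problem: the costates are affine in time, hence the optimal acceleration is affine, and matching the two boundary conditions on $(\rvx,\rvv)$ at $t$ and at $1$ gives $\rva^{*}(t)=\tfrac{6(\rvx_1-\rvx_t)}{(1-t)^2}-\tfrac{4\rvv_t}{1-t}-\tfrac{2\rvv_1}{1-t}$, which reduces to the stated feedback upon inserting $\rvv_1=(\rvx_1-\rvx_t)/(1-t)$ and is manifestly independent of $g_t$.
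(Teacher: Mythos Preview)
Your proposal is correct and, at its core, follows the same route as the paper: both reduce the LQG problem to a Riccati/Lyapunov-type object, invert the resulting $2\times 2$ block matrix, and then substitute the special terminal velocity $\rvv_1=(\rvx_1-\rvx_t)/(1-t)$ to collapse everything to a scalar multiple of $\tfrac{\rvx_1-\rvx_t}{1-t}-\rvv_t$. The paper simply cites the optimal-control formula $\rvu^{*}=-\rvg\rvg^{\T}\rmP_t^{-1}\big(\rvm_t-\Phi(t,1)\rvm_1\big)$ from \cite{chen2015stochastic}, solves the Lyapunov equation $\dot{\rmP}=A\rmP+\rmP A^{\T}-\rvg\rvg^{\T}$ with $\rmP_1=\mathbf{0}$, inverts $\rmP_t$ explicitly, and does the matrix multiplication. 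Your Doob $h$-transform computation is literally the same object in a different frame: the transition covariance you call $\Sigma_{t,1}$ satisfies $\rmP_t=\Phi(t,1)\Sigma_{t,1}\Phi(t,1)^{\T}$, so $\rvg\rvg^{\T}\Phi(1,t)^{\T}\Sigma_{t,1}^{-1}\big(m_1-\Phi(1,t)\rvm_t\big)$ equals the paper's $-\rvg\rvg^{\T}\rmP_t^{-1}\big(\rvm_t-\Phi(t,1)m_1\big)$ identically. What you add beyond the paper is (i) a self-contained HJB derivation instead of black-boxing \cite{chen2015stochastic}, and (ii) the deterministic Pontryagin cross-check, which is valuable because it makes the certainty-equivalence claim concrete and shows directly that the feedback $g_t^2 P_{11}=4/(1-t)$ is $g$-free; the paper's Lyapunov computation (its Lemma~\ref{lemma:lyapunov_equation}) is carried out only for constant $g$, so your PMP argument is actually the cleanest justification that the stated formula holds for time-varying $g_t$ as well.
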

\begin{proof}
    Please see Appendix.\ref{Appendix:prop:phase-optimal-control}.
\end{proof}
\begin{remark}
    $P_{11}$ denotes the second diagonal component in the matrix $P_t$, a solution derived from the Lyapunov equation (see Lemma.\ref{lemma:lyapunov_equation}), serving as an implicit representation of the optimality of the control. This value is dependent upon the uncontrolled dynamics, where $\rva_t$ is set to the zero vector in (\ref{eq:stochastic-bridge}), and will vary accordingly when uncontrolled dynamics change.
\end{remark}
\begin{figure}[t]
    \includegraphics[width=\textwidth]{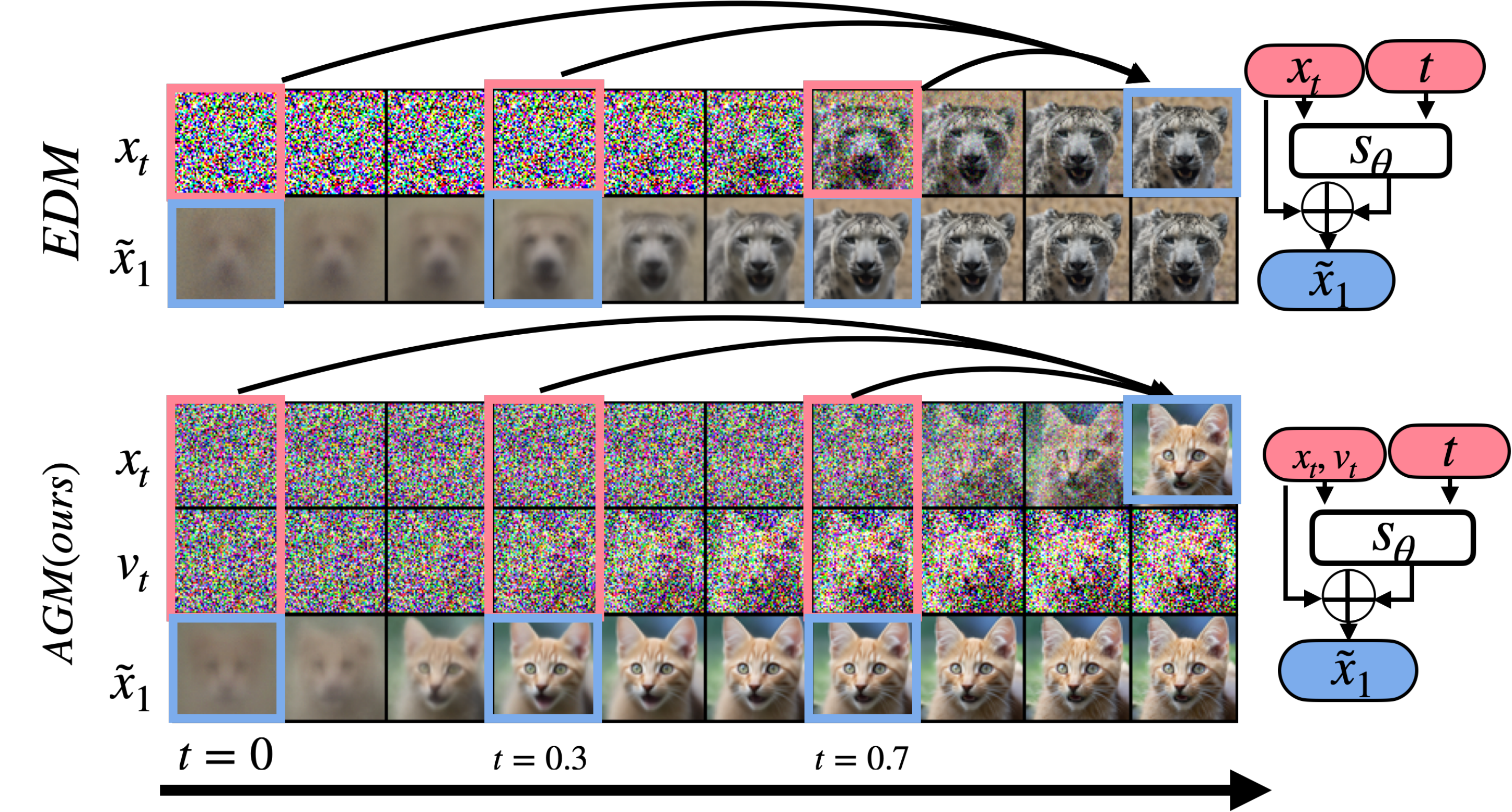}
    \caption{Data estimation comparison with EDM \citep{karras2022elucidating}. When the network is endowed with supplementary velocity, AGM gains the
    capacity to estimate the target data point during the early stages of the trajectory. One can use estimated image $\tilde{\rvx}_1$ at $t_{i}<t_N$ as generated results and allocated more NFE between time $[0,t_{i}]$ which results to smaller discretization error.}
    \label{fig:sampling-hop}
    \vskip -0.05in
\end{figure}

\subsection{Training}
By plugging the optimal control (\ref{eq:soc-solution}) back to the dynamics (\ref{eq:stochastic-bridge}), we can obtain the desired SDE. As been suggested by \citep{song2020score,dockhorn2021score}, such SDE has a corresponding probablistic ODE which shares the same marginal over time in which the drift term will have an additional score term $\nabla_{\rvv}\log p(\rvm_t,t)$. Here we summarize the force term for SDE and ODE as:
\begin{equation}\label{eq:force-dyn}
    \begin{aligned}
        &\begin{bmatrix}
            \rd \rvx_t\\
            \rd \rvv_t
        \end{bmatrix}=\begin{bmatrix}
            \rvv_t\\
            \rmF_t
        \end{bmatrix}\rd t+\begin{bmatrix}
            \bzero&\bzero\\
            \bzero&h_t
        \end{bmatrix}\rd \rvw_t \quad \text{s.t}\quad \rvm_0:=\begin{bmatrix}
            \rvx_0\\
            \rvv_0
        \end{bmatrix}\sim\mathcal{N}(\bmu_0,\bSigma_0),\\
        &\text{Bridge Matching SDE}: \rmF_t:=\rmF_t^{b}(\rvm_t,t)\equiv \rva_t^{*}(\rvm_t,t),  \quad  \quad \quad \quad \quad \quad \quad  \quad h(t):=g(t),\\
        &\text{Probablistic ODE}: \rmF_t:=\rmF_t^{p}(\rvm_t,t)\equiv \rva^{*}_t(\rvm_t,t)-\frac{1}{2}g_t^{2}\nabla_{\rvv}\log p(\rvm,t), \quad h(t):=0.
    \end{aligned}
\end{equation}
Henceforth, we refer to the dynamics associated with the Bridge Matching SDE as AGM-SDE, and its corresponding ODE counterpart as AGM-ODE. Meanwhile, the linearity of the system implies the intermediate state $\rvm_t$ and the close form solution of score term are analytically available. In particular, the mean $\bmut$ and covariance matrix $\bSigmat$ of the intermediate marginal $p_t(\rvm_t|\rvx_1)=\mathcal{N}(\bmut,\bSigmat)$ of such a system can be analytically computed with $\bSigma_t=\begin{bmatrix}
        \Sigxxt & \Sigxvt\\
        \Sigxvt & \Sigvvt
    \end{bmatrix}\otimes\mI_d$, and $\bmut=\begin{bmatrix}
        \muxt\\
        \muvt
    \end{bmatrix}$, provided we have the boundary conditions $\bmu_0$ and $\bSigma_0$ in place, as outlined in \cite{sarkka2019applied}. Please see Appendix.\ref{Appendix:mean-cov}  for detail.
In order to sample from such multi-variant Gaussian, one need to decompose the covariance matrix by Cholesky decomposition, and $\rvm_t$ is reparamertized as:
\begin{align}\label{eq:analytic-solution}
  \rvm_t=\bmut+\rmL_t\epsilon =\bmut+\begin{bmatrix}
    L_t^{xx}\epsz\\
    L_t^{xv}\epsz+L_t^{vv}\epso\\
  \end{bmatrix},\nabla_{\rvv}\log p_t:=-\ell_t\epso
\end{align}
where $\bSigmat=\rmL_t\rmL_t^\T$ ,$\epsilon=\begin{bmatrix}
          \epsz\\
          \epso
        \end{bmatrix}\sim \mathcal{N}(\bzero,\mathbf{I}_{2d})$ and $\ell_t=\sqrt{\frac{\Sigxxt}{\Sigxxt\Sigvvt-(\Sigxvt)^2}}$.

\textbf{Parameterization}: The Force term can be represented as a composite of the data point and Gaussian noise. Specifically,
\begin{align}\label{eq:force-x-noise}
    \rva^*(\rvm_t,t)=4\rvx_1(1-t)^2-g_t^2P_{11}\left[\left(\frac{L_t^{xx}}{1-t}+L_t^{xv}\right)\epsz+L_t^{vv}\epso\right].
\end{align} 
We express the force term as $\rmF_t^{\theta}=\rvs^{\theta}_t\cdot\rvz_t$. Here, $\rvz_t$ assumes the role of regulating the output of the network $\rvs^{\theta}_t$, ensuring that the variance of the network output is normalized to unity. For the detailed formulation of the normalizer $\rvz_t$, please refer to Appendix.\ref{Appendix:normalizer}. In a manner similar to the BM approach, one can formulate the objective function for regressing the force term as follows:
\begin{align}\label{eq:objective-fn}
    \min_{\theta}\E_{t\in[0,1]}\E_{\rvx_1\sim \pdata}\E_{\rvm_t\sim p_t(\rvm_t|\rvx_1)}\lambda(t)\left[\norm{\rmF_t^{\theta}(\rvm_t,t;\theta)-\rmF_t(\rvm_t,t)}_2^2\right]
\end{align}
Where $\lambda(t)$ is known as the reweight of the objective function across the time horizon. We defer the derivation of $\ell_t$ and the presentation of $\rmL_t$, $\lambda(t)$ and $\rva_t$ in Appendix.\ref{appendix:method-appendix}.
\subsection{Sampling from AGM}
Once the paramterized force term $\rmF_t^{\theta}$ is trained, we are ready to simulate the dynamics to generate the samples by plugging it back to the dynamics (\ref{eq:force-dyn}). One can use any type of SDE or ODE sampler to propagate the learnt system. Here we list our choice of sampler for AGM-SDE and AGM-ODE.

\textbf{Stochastic Sampler:}  To simulate the SDE, prior works are majorly relying on Euler-Maruyama(EM) \citep{kloeden1992stochastic} and related methods. We adopt the Symmetric Splitting Sampler(SSS) from \cite{dockhorn2021score} in our AGM-SDE. This selection is based on the compelling performance it offers when dealing with momentum systems.

\textbf{Deterministic Sampler:} It is imperative to acknowledge that this system is inherently underactuated because the force term is exclusively injected into the velocity component, while velocity serves as the driving factor for the position—a variable of primary interest in generative modeling context. More specifically, at time step $t_i$, the impact of force does not immediately manifest in the position but rather takes effect at a subsequent time step, denoted as $t_{i+1}$ after discretizing time horizon. At time $t_0$, it becomes undesirable to propagate the state $\rvx_0$ using an initially uncontrolled velocity over an extended time interval $\delta_0$. The presence of this delay phenomenon can also exert an influence when the time interval $\delta_t$ is large, thereby impeding our ability to reduce the NFE during sampling.  We propose the adoption of an Exponential Integrator (EI) approach, as elaborated in \cite{zhang2022fast}. Empirical evidence suggests that this method aligns well with our model. We provide an illustrative example of how the AGM-ODE, in conjunction with the EI technique, can be employed to inject the learnt network into both velocity and position channels simultaneously:
\begin{equation}\label{eq:EI-sampling}
    \begin{aligned}
        \begin{bmatrix}
            \rvx_{t_{i+1}}\\
            \rvv_{t_{i+1}}
        \end{bmatrix}&=\Phi(t_{i+1},t_{i})\begin{bmatrix}
            \rvx_t\\
            \rvv_t
        \end{bmatrix}+\sum_{j=0}^{w}\begin{bmatrix}
            \int_{t_i}^{t_{i+1}}\left(t_{i+1}-\tau\right)\rvz_{\tau}\cdot \rmM_{i,j}(\tau)\rd\tau\cdot \markgreen{\rvs^{\theta}_t(\rvm_{t_{i-j}},t_{i-j})}) \\
            \int_{t_i}^{t_{i+1}}\rvz_{\tau}\cdot \rmM_{i,j}(\tau)\rd\tau\cdot \markgreen{\rvs^{\theta}_t(\rvm_{t_{i-j}},t_{i-j})}
        \end{bmatrix}\\
        &\text{Where} \ \rmM_{i,j}(\tau)=\prod_{k\neq j}\left(\frac{\tau-t_{i-k}}{t_{i-j}-t_{i-k}}\right),\quad \text{and} \quad 
        \Phi(t,s)=\begin{bmatrix}
            1 &t-s\\
            0 &1
        \end{bmatrix}.
    \end{aligned}
\end{equation}
In Eq.\ref{eq:EI-sampling}, $\Phi(s,t)$ denotes the transition matrix for our system, while $\rmM_{i,j}(\tau)$ represents the $w-$order multistep coefficient \citep{hochbruck2010exponential}. For a comprehensive derivation of these terms, please refer to Appendix.\ref{Appendix:EI}. It is worth noting that the mapping of $\rvs_{\theta}$ into both the position and velocity channels significantly emulates the errors introduced by discretization delays.
\textbf{Sampling-hop:} In the context of CLD \citep{dockhorn2021score}, their focus is on estimating the score function w.r.t. velocity, which essentially corresponds to estimating scaled $\eps_1$ in our notation. However, relying solely on the aforementioned information is not sufficient for estimating the data point $\rvx_1$. Additional knowledge regarding $\epsz$ is also required in order to perform such estimation. In our case, the training objective implicitly includes both $\epsz$ and $\epso$ (see eq.\ref{eq:force-x-noise}), hence one can manage to recover $\rvx_1$ by Proposition.\ref{prop:x1}.
Remarkably, our observations have unveiled that when the network is equipped with additional velocity information, it acquires the capability to estimate the target data point during the early stages of the trajectory, as illustrated in fig.\ref{fig:sampling-hop}. This estimation can be seamlessly integrated into AGM-SDE and AGM-ODE and we name it sampling-hop. Specifically, 
\begin{proposition}[Sampling-Hop]\label{prop:x1}
    Given the state, velocity and trained force term $\rmF_{t}^{\theta}$ at time step $t$ in sampling phase, The estimated data point $\tilde{\rvx}_1$ can be represented as
    \begin{align}
        \tilde{\rvx}_1^{SDE}=\frac{(1-t)(\rmF_{t}^{\theta}+\rvv_t)}{g_t^{2}P_{11}}+\rvx_t,\ \  &\text{or} \quad \tilde{\rvx}^{ODE}_1=\frac{\rmF_t^{\theta}+g_t^2P_{11}(\alpha_t \rvx_t+\beta_t\rvv_t)}{4(t-1)^2+g_t^{2}P_{11}(\alpha_t \muxt+ \beta_t\muvt)}
    \end{align}
    for AGM-SDE and AGM-ODE dynamics respectively, and $\beta_t =\Lvvt+\frac{1}{2 P_{11}}$,$\alpha_t = \frac{(\frac{L_t^{xx}}{1-t}+L_t^{xv})- \beta_t L^{xv}_t}{L^{xx}_t}$.
\end{proposition}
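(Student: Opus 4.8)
The plan is to obtain $\tilde{\rvx}_1$ by inverting the closed-form expression of the force term. In the sampling phase we replace the exact force by the learned $\rmF_t^\theta$ — that is, $\rmF_t^b=\rva^*$ for AGM-SDE and $\rmF_t^p=\rva^*-\tfrac12 g_t^2\nabla_{\rvv}\log p_t$ for AGM-ODE, cf. \eqref{eq:force-dyn} — and observe that, thanks to the linear--Gaussian structure of the bridge, this force is an \emph{affine} function of the clean datum $\rvx_1$ whose coefficients are computable from $(\rvx_t,\rvv_t,t)$ alone. Solving that affine relation for $\rvx_1$ gives the two formulas. I would treat the SDE and ODE cases separately, since the SDE case needs no elimination of the Gaussian seeds $(\epsz,\epso)$ while the ODE case does; throughout, every matrix is a scalar times $\mI_d$, so ``dividing'' is legitimate.

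\emph{AGM-SDE.} Proposition~\ref{prop:phase-optimal-control} already expresses $\rva^*(\rvm_t,t)=g_t^2P_{11}\!\left(\frac{\rvx_1-\rvx_t}{1-t}-\rvv_t\right)$ as an affine function of $\rvx_1$ in which $(\epsz,\epso)$ do not appear. Substituting $\rmF_t^\theta$ for $\rva^*$ and rearranging for $\rvx_1$ is a one-line computation and yields $\tilde{\rvx}_1^{SDE}$.

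\emph{AGM-ODE.} Here the extra score term must be unwound. From \eqref{eq:analytic-solution} we have $\nabla_{\rvv}\log p_t=-\ell_t\epso$, and inverting the lower-triangular Cholesky map $\rvm_t=\bmut+\rmL_t\epsilon$ of \eqref{eq:analytic-solution} gives $\epsz=(\rvx_t-\muxt)/L_t^{xx}$ and $\epso=\bigl((\rvv_t-\muvt)-L_t^{xv}\epsz\bigr)/L_t^{vv}$. Plugging these into the decomposition \eqref{eq:force-x-noise} of $\rva^*$ (an $\rvx_1$-term plus a known linear combination of $\epsz,\epso$) and adding the score contribution $+\tfrac12 g_t^2\ell_t\epso$ expresses $\rmF_t^p$ as an $\rvx_1$-term minus $g_t^2P_{11}$ times an explicit linear combination of $\epsz$ and $\epso$, the $\epso$-coefficient being the one modified by the score term. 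The next step is to \emph{define} $\alpha_t,\beta_t$ as the unique weights making $\alpha_t(\rvx_t-\muxt)+\beta_t(\rvv_t-\muvt)=(\alpha_t L_t^{xx}+\beta_t L_t^{xv})\epsz+\beta_t L_t^{vv}\epso$ reproduce that seed combination; matching the $\epso$- and then the $\epsz$-coefficient determines $\beta_t$ and then $\alpha_t$, and after simplifying $\ell_t$ using the Cholesky identities $\Sigxxt=(L_t^{xx})^2,\ \Sigxvt=L_t^{xx}L_t^{xv},\ \Sigvvt=(L_t^{xv})^2+(L_t^{vv})^2$ (so that $\ell_t=1/L_t^{vv}$) one should recover the stated $\beta_t=L_t^{vv}+\tfrac{1}{2P_{11}}$ and $\alpha_t=\bigl((\tfrac{L_t^{xx}}{1-t}+L_t^{xv})-\beta_t L_t^{xv}\bigr)/L_t^{xx}$. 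With this choice $\rmF_t^p+g_t^2P_{11}(\alpha_t\rvx_t+\beta_t\rvv_t)$ equals a scalar multiple of $\rvx_1$: since $\bmu_0=\bzero$ in Table~\ref{table:model-diff}, the conditional mean $\bmut$ is proportional to $\rvx_1$ (with a $t$-dependent scalar) while $\rmL_t$ does not depend on $\rvx_1$, so all $\rvx_1$-proportional contributions, including those hidden inside $\alpha_t\muxt+\beta_t\muvt$, can be gathered on one side, and dividing yields $\tilde{\rvx}_1^{ODE}$ with denominator $4(t-1)^2+g_t^2P_{11}(\alpha_t\muxt+\beta_t\muvt)$.

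The main obstacle is not the inversion, which is linear, but the bookkeeping in that last step: one has to invoke the explicit solutions for $\bmut$ and $\rmL_t$ of this linear momentum system (\cite{sarkka2019applied}; Appendix~\ref{Appendix:mean-cov}) to verify that $\bmut$ carries no $\rvx_1$-independent piece and that the prefactor $\tfrac12 g_t^2\ell_t$ of the score term collapses to exactly $\tfrac{1}{2P_{11}}$ next to the $L_t^{vv}$ term — the two facts that make $\beta_t$, $\alpha_t$, and the denominator come out in the clean closed form asserted in the statement. Once those simplifications are in place, both identities follow by routine rearrangement.
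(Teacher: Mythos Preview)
Your proposal is correct and follows essentially the same route as the paper's own proof: for AGM-SDE you invert the affine relation $\rva^*=g_t^2P_{11}\bigl(\tfrac{\rvx_1-\rvx_t}{1-t}-\rvv_t\bigr)$ directly, and for AGM-ODE you write $\rmF_t^p$ as an $\rvx_1$-term plus a linear combination of $\epsz,\epso$, choose $\alpha_t,\beta_t$ so that $\alpha_t(\rvx_t-\muxt)+\beta_t(\rvv_t-\muvt)$ reproduces that seed combination, and then exploit that $\muxt,\muvt$ are scalar multiples of $\rvx_1$ to isolate $\rvx_1$. Your treatment is in fact a bit more explicit than the paper's (e.g.\ spelling out $\ell_t=1/L_t^{vv}$ via the Cholesky identities and why $\bmu_0=\bzero$ forces $\bmut\propto\rvx_1$), but the argument is the same.
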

\begin{proof}
    See Appendix.\ref{Appendix:x1}
\end{proof}
This property empowers us to allocate the NFE budget selectively within the time interval $t\in[0,t_i]$, where $t_i < t_N$, effectively reducing the discretization error while maintaining the sampling quality. This insight paves the way for efficient low NFE sampling strategies later. Here we summarized the training and sampling procedure of our method in Algorithm.\ref{algo-train} and Algorithm.\ref{algo-sampling} respectively.
\begin{minipage}{0.49\textwidth}
    \begin{algorithm}[H]
        \caption{Training}\label{algo-train}
        \begin{algorithmic}[1]
            \STATE  \textbf{Input:} data distribution $\pdata(\cdot)$
            \WHILE{not converge}
                \STATE $t\sim\mathcal{U}([0,1])$, $\rvx_1\sim \pdata(\rvx_1)$
                \STATE Compute mean and covariance $\bmut$ and $\bSigmat$. (Appendix.\ref{Appendix:mean-cov})
                \STATE Sample $\rvm_{t}=\bmut+\rmL_t \eps$.(eq.\ref{eq:analytic-solution})
                \STATE Compute target $\rmF_t$ (eq.\ref{eq:force-dyn}) using optimal acceleration (eq.\ref{eq:force-x-noise})
                \STATE Compute loss  $\mathbb{E}\left[\lambda \norm{\rmF_{t}^{\theta}-\rmF_t}_2^2\right]$(eq.\ref{eq:objective-fn}).
                \STATE Take gradient descent with respect to $\rmF_t^{\theta}(\rvm_t,t;\theta)$.
            \ENDWHILE
        \end{algorithmic}
    \end{algorithm}
    \end{minipage}
    \hfill
\begin{minipage}{0.49\textwidth}
\begin{algorithm}[H]
    \caption{Sampling}\label{algo-sampling}
    \begin{algorithmic}[1]
        \STATE  \textbf{Input:} trained $\rmF(\cdot,\cdot;\theta)$, discretized time step [$t_0$,$\cdots$,$t_i$], Choose the \textbf{sampler} from [SSS(SDE), EI(ODE)]. Choose prior mean and covariance $\bmu_0$, $\bSigma_0$
        \STATE Sample $\rvm_{0}\sim p_0(\rvm;\bmu_0,\bSigma_0)$.
        \FOR{n = $0$ to $i$}
            \STATE estimate $\rmF_{t_n}^{\theta}(\rvm_{t_n},t_n)$
            \STATE $\rvm_{t_{n+1}}=\textbf{Sampler}(\rvm_{t_n},F_{t_n}^{\theta},t_n)$
            \STATE reconstruct $\hat{\rvx}_1$ using Proposition.\ref{prop:x1}.
        \ENDFOR
        \STATE \textbf{Return}\ $\hat{\rvx}_1$
    \end{algorithmic}
\end{algorithm}
\end{minipage}


\begin{wrapfigure}{r}{0.3\textwidth}
    \centering
    \setbox0\hbox
    {\includegraphics[width=0.3\textwidth]{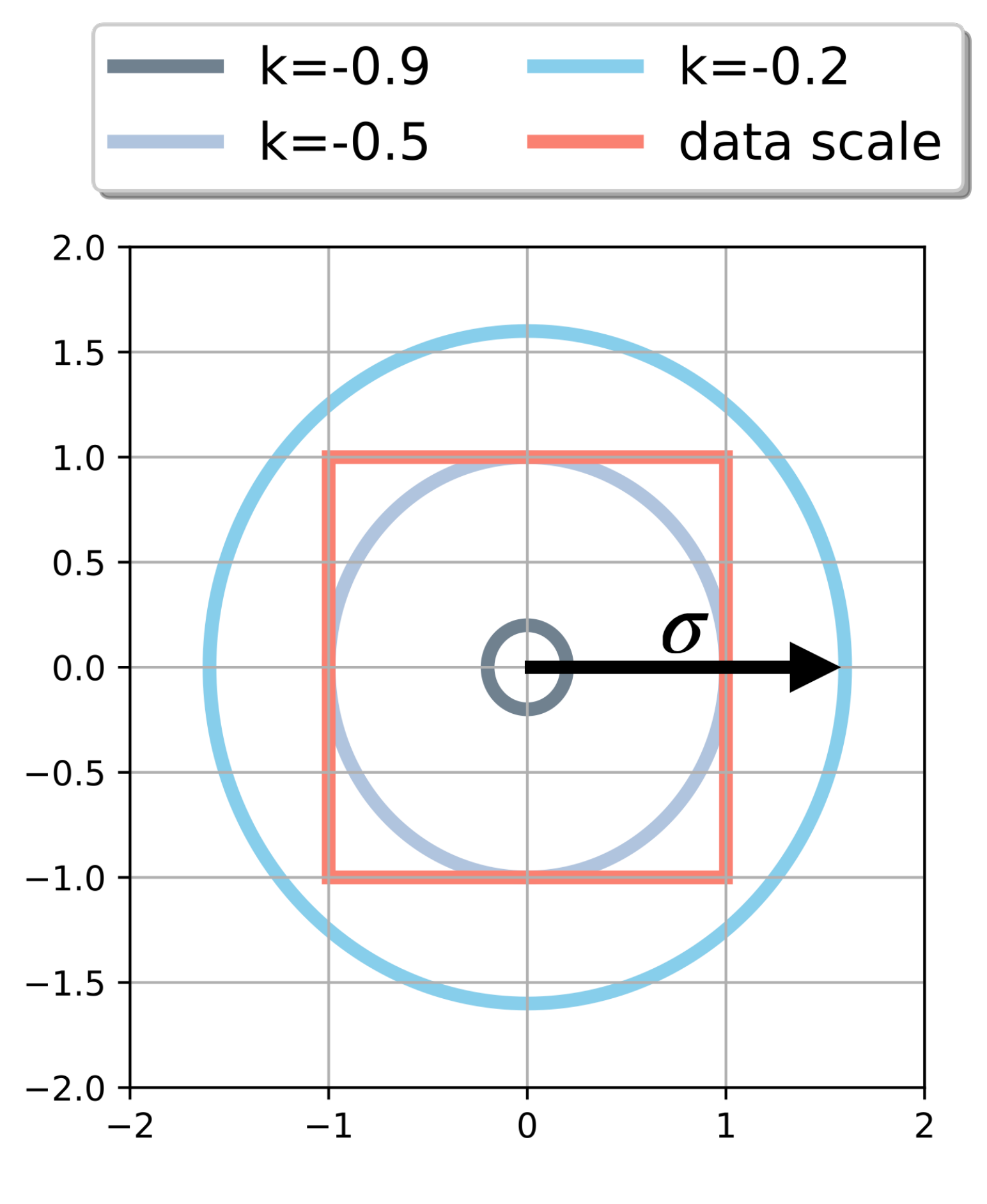}}  
      \ht0=100pt 
      \dp0=0pt
      \box0
    \caption{The standard deviaton $\sigma$ of the terminal marginal for uncontrolled dynamics. We empirically selected the hyperparameter $k=-0.2$. This choice induces a terminal marginal distribution with  $\sigma$ that covers the data range with uncontrolled dynamics.}\label{fig:terminal-cov}
    \vskip -1pt
  \end{wrapfigure}
\section{Experimental Results}\label{Sec:exp}
\textbf{Architectures and Hyperparameters:} We parameterize $\rvs_t^{\theta}(\cdot,\cdot;\theta)$ using modified NCSN++ model as provided in \cite{karras2022elucidating}. We employ six input channels, accounting for both position and velocity variables, as opposed to the standard three channels used in the CIFAR-10 \citep{krizhevsky2009learning}, AFHQv2 \citep{choi2020stargan} and ImageNet \citep{deng2009imagenet} which leads to a negligible increase of network parameters. For the purpose of comparison with CLD in the toy dataset, we adopt the same ResNet-based architecture utilized in CLD. Throughout all of our experiments, we maintain a monotonically decreasing diffusion coefficient, given by $g(t) = 3(1-t)$. For the detailed experimental setup, please refer further to Appendix.\ref{Appendix:exp}.

\textbf{Evaluation}: To assess the performance and the sampling speed of various algorithms, we employ the Fréchet Inception Distance score (FID;\cite{heusel2017gans}) and the Number of Function Evaluations (NFE) as our metrics. For FID evaluation, we utilize reference statistics of all datasets obtained from EDM \citep{karras2022elucidating} and use 50k generated samples to evaluate. Additionally, we re-evaluate the FID of CLD and EDM using the same reference statistics to ensure consistency in our comparisons. For all other reported values, we directly source them from respective referenced papers.

\textbf{Selection of $\bSigma_0$}: The choice of initial covariance $\bSigma_0$ directly influences the path measure of the trajectory. In our case, we set $\bSigma_0:=\bigl[\begin{smallmatrix}1 & k\\ k& 1\end{smallmatrix}\bigr]$ with hyperparameter $k$. We observe that trajectories tend to exhibit pronounced curvature under specific conditions: when the $k$ is positive, the absolute value of the position is large. This behavior is particularly noticeable when dealing with images, where the data scale ranges from -1 to 1. We aim for favorable uncontrolled dynamics, as this can potentially lead to better-controlled dynamics. Our strategy is to design $k$ in such a way that the marginal distribution of uncontrolled dynamics at $t_N=1$ effectively covers the range of image data values meanwhile $k$ keeps negative. We can express the marginal of uncontrolled dynamics by leveraging the transition matrix $\Phi(1,0)$, which gives us $\rvx_1 := \rvx_0 + \rvv_0$. Figure \ref{fig:terminal-cov} illustrates the standard deviation of $\rvx_1$ for various values of $k$. Based on our empirical observations, we choose $k = -0.2$ for all experiments, as it effectively covers the data range. The subsequent controlled dynamics (eq.\ref{eq:force-dyn}) will be constructed based on such desired uncontrolled dynamics as established.


\begin{figure}[t]
    \includegraphics[width=\textwidth]{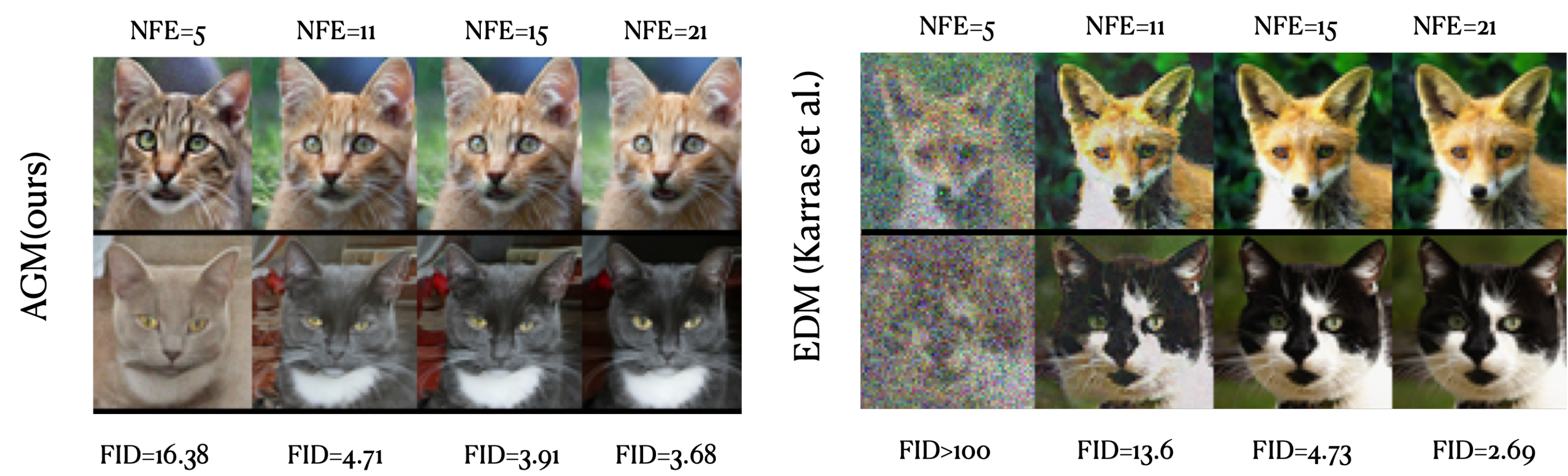}
    \caption{Comparison with EDM \citep{karras2022elucidating} on AFHQv2 dataset. AGM-ODE exhibits superior generative performance when NFE is exceedingly low, owing to its unique dynamics architecture that incorporates velocity when predicting the estimated data point. }
    \label{fig:NFE_ablation}
\end{figure}
\begin{figure}[t]
    \includegraphics[width=\textwidth]{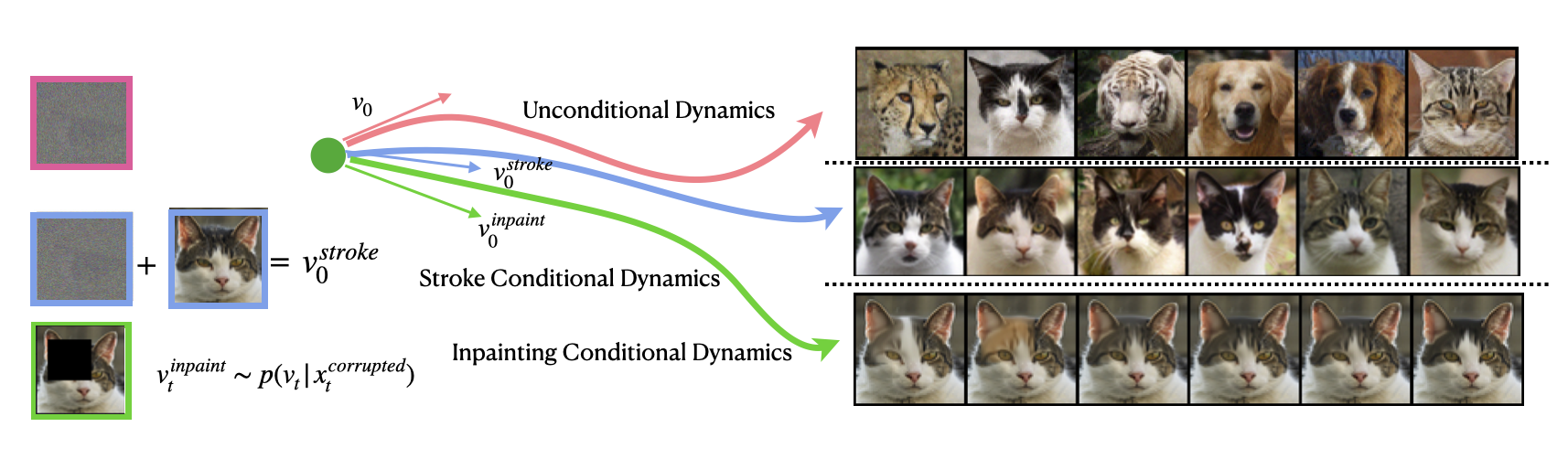}
    \caption{We showcase that AGM can generate conditional results from an unconditional model by injecting the conditional information into the velocity $\rvv_0$, thus leading to new initial velocity $\rvv_0^{cond}$.}
    \label{fig:cond_generation}
    \vskip -0.3in
\end{figure}
\begin{wraptable}{r}{0.3\textwidth}
\vspace{-20pt}
\caption{FID$\downarrow$ Comparison with CLD\citep{dockhorn2021score} using same SSS Sampler on CIFAR-10.}\label{table:sscs-sampler}
\vspace{-20pt}
\setlength\tabcolsep{1.5pt} 
\begin{tabular}{ccc}\\\toprule  
NFE$\downarrow$ & CLD-SDE & AGM-SDE \\\midrule
20      &$>$100 & \textbf{7.9}\\  \midrule
50      &19.93 & \textbf{3.21}\\  \midrule
150     &2.99 & \textbf{2.68}\\  \midrule
1000    &\textbf{2.44} & 2.46\\  \bottomrule
\end{tabular}
\end{wraptable} 
\textbf{Stochastic Sampling:} In experiments, we emphasize the advantages of using the AGM-SDE compared with CLD. Firstly, we show that our model exhibits superior performance when NFE is significantly lower than that of CLD, particularly in toy dataset scenarios. For evaluation, we utilized the multi-modal Mixture of Gaussian and Multi-Swiss-Roll datasets. The results obtained from the toy dataset, as shown in Fig.\ref{fig:cld-am-toy}, demonstrate that AGM-SDE is capable of generating data that closely aligns with the ground truth, while requiring NFE that is around one order of magnitude lower than CLD. Furthermore, our findings reveal that AGM-SDE outperforms CLD in the context of CIFAR-10 image generation tasks, especially when faced with limited NFE, as illustrated in Table \ref{table:sscs-sampler}.

\textbf{Deterministic Sampling:} We validate our algorithm on high-dimensional image generation with a deterministic sampler. We provide uncurated samples from CIFAR-10, AFHQv2 and ImageNet-64 with varying NFE in Appendix.\ref{Appendix:figs}.  Regarding the quantitative evaluation, Table.\ref{table:all-results} and Table.\ref{table:Imagenet} summarize
the FID together with NFE used for sampling on CIFAR-10 and ImageNet-64.  Notably, AGM-ODE achieves 2.46 FID score with 50 NFE on
CIFAR-10, and 10.55 FID score with 20 NFE in unconditional ImageNet-64 which is comparable to the existing dynamical generative modeling. 

We underscore the effectiveness of sampling-hop, especially when faced with a constrained NFE budget, in comparison to baselines. We validate it on the CIFAR-10 and AFHQv2 dataset respectively. Fig.\ref{fig:NFE_ablation} illustrates that AGM-ODE is able to generate plausible images even when NFE$=5$ and outperforms EDM\citep{karras2022elucidating} when NFE is extremely small (NFE$<$15) visually and numerically on AFHQv2 dataset. We also compare with other fast sampling algorithms built upon DM in table.\ref{table:small-NFE} on CIFAR-10 dataset where AGM-ODE demonstrates competitive performance. Notably, AGM-ODE outperforms the baseline CLD with the same EI sampler by a large margin. We suspect that the improvement is based on the rectified trajectory which is more friendly for the ODE solver.

\textbf{Conditional Generation} We showcase the capability of AGM to generate conditional samples using an unconditional model (fig.\ref{fig:cond_generation}) by incorporating conditional information into the prior velocity variable $\rvv_0$. Instead of employing a randomly sampled $\rvv_0$, we use a linear combination of $\rvv_0$ and the desired velocity $\rvv_1 = (\rvx_1 - \rvx_{t_0}) / (1 - t_0)$, where $\rvx_1$ is conditioned data. Thus, $t_0$, the initial velocity is defined as $\rvv_0^{cond} := (1 - \xi)\rvv_0 + \xi \rvv_1$, with $\xi$ serving as a mixing coefficient. Fig.\ref{fig:cond_generation} shows that AGM can generate conditional data \emph{without} augmentation and additional fine-tuning. Such property can be extended to the inpainting task as well and the detail can be found in appendix.\ref{Appendix:cond-gen}.

\begin{table}
    \hspace*{0.5cm} 
    \parbox{.5\linewidth}{
    \centering
    \setlength\tabcolsep{1.5pt} 
    \caption{Unconditional CIFAR-10 generative performance}\label{table:all-results}
    \begin{tabular}{|l|l|l|l|l|l|l|}
        \toprule
        & Model Name&NFE$\downarrow$&FID$\downarrow$\\
        \hline
        \multirow{4}{*}{\begin{tabular}{c}\small ODE\end{tabular}}                 & EDM \small\citep{karras2022elucidating} & 35 &1.84\\
                                                                            & CLD+EI \small\citep{zhang2022gddim} & 50&2.26  \\
                                                                            & FM-OT \small\citep{lipman2022flow} &142 &6.35  \\
                                                                            & \textbf{AGM-ODE(ours)}            &50& 2.46\\
        \hline
        \multirow{2}{*}{\begin{tabular}{c}\small SDE\end{tabular}}          & VP \small\citep{song2020score}  &1000&2.66\\
                                                                            & VE \small\citep{song2020score}  &1000&2.43\\
                                                                            & CLD \small\citep{dockhorn2021score}             &1000& 2.44\\
                                                                            & \textbf{AGM-SDE(ours)}            &1000& 2.46\\
        \bottomrule
    \end{tabular}
    }
    \parbox{.4\linewidth}{
    \centering
    \setlength\tabcolsep{1.5pt} 
    \caption{Unconditional ImageNet-64 generative performance}\label{table:Imagenet}
    \vspace*{-10pt}
    \begin{tabular}{lll}\\\toprule  
        Model& NFE$\downarrow$ & FID$\downarrow$ \\\midrule
        FM-OT\small\citep{lipman2022flow}               &138 & 14.45\\  \midrule
        MFM\small\citep{pooladian2023multisample}      &132 & 11.82\\  \midrule
        MFM\small\citep{pooladian2023multisample}      &40 & 12.97\\  \bottomrule
        \textbf{AGM-ODE(ours)}                                  &40 & 10.10\\  \midrule
        \textbf{AGM-ODE(ours)}                                  &30 & 10.07\\  \midrule
        \textbf{AGM-ODE(ours)}                                   &20 & 10.55\\  \bottomrule
        \end{tabular}
    }
\end{table}


\begin{table}[t]
\caption{Performance comparing with fast sampling algorithm using FID$\downarrow$ metric on CIFAR-10}\label{table:small-NFE}
\centering
\begin{tabular}{|l|l|l|l|l|l|l|}
\toprule
            & &NFE$\downarrow$&5 & 10 & 20\\
\hline
Dynamics Order & Model Name\\
\hline
\multirow{4}{*}{\begin{tabular}{c}1st order\\dynamics\end{tabular}} & EDM \citep{karras2022elucidating} &&$>$ 100 &15.78 & \textbf{2.23} \\ 
                                                                    & VP+EI \citep{zhang2022fast} &&15.37 &\textbf{4.17} &3.03 \\
                                                                    & DDIM \citep{song2020denoising} &&26.91 &11.14&3.50 \\ 
                                                                    &Analytic-DPM\citep{bao2022analytic}&&51.47 &14.06 &6.74\\ 
\hline
\hline
\multirow{2}{*}{\begin{tabular}{c}2nd order\\dynamics\end{tabular}} & CLD+EI \citep{zhang2022gddim} &&N/A &13.41 & 3.39 \\
                                                                    & \textbf{AGM-ODE(ours)}            &&\textbf{11.93} &4.60 &2.60 \\
\bottomrule
\end{tabular}
\end{table}

\section{Conclusion and Limitation}
In this paper, we introduce a novel Acceleration Generative Modeling (AGM) framework rooted in SOC theory. Within this framework, we devise more favorable, straight trajectories for the momentum system. Leveraging the intrinsic characteristics of the momentum system, we capitalize on additional velocity to expedite the sampling process by using the sampling-hop technique, significantly reducing the time required to converge to accurate predictions of realistic data points. Our experimental results, conducted on both toy and image datasets in unconditional generative tasks, demonstrate promising outcomes for fast sampling. 

However, it is essential to acknowledge that our approach's performance lags behind state-of-the-art methods in scenarios with sufficient NFE. This observation suggests avenues for enhancing AGM performance. Such improvements could be achieved by enhancing the training quality through the adoption of techniques proposed in \cite{karras2022elucidating} including data augmentation, fine-tuned noise scheduling, and network preconditioning, among others.


\newpage
\bibliography{iclr2024_conference}
\bibliographystyle{iclr2024_conference}

\newpage
\appendix
\section{supplementary Summary}
We state the assumptions in Appendix.\ref{appendix:assumption}. We provide the technique details appearing in Section.\ref{sec:method} at Appendix.\ref{appendix:method-appendix}. The details of the experiments can be found in Appendix.\ref{Appendix:exp}. The visualization of generated figures can be found in Appendix.\ref{Appendix:figs}.
\section{Assumptions}\label{appendix:assumption}
We will use the following assumptions to construct the proposed method.
These assumptions are adopted from stochastic analysis for SGM \citep{song2021maximum,yong1999stochastic,anderson1982reverse},%
\begin{enumerate}[label=(\roman*)]
  \item $p_0$ and $p_1$ with finite second-order moment. \label{assum:i}

  \item $g_t$ is continuous functions, and $|g(t)|^2>0$ is uniformly lower-bounded w.r.t. $t$.

  \item $\forall t\in[0,1]$, we have $ \nabla_\rvv \log p_t(\rvm_t,t)$ Lipschitz and at most linear growth w.r.t. $\rvx$ and $\rvv$.

\end{enumerate}

Assumptions (i) (ii) are standard conditions in stochastic analysis to ensure the existence-uniqueness of the SDEs; hence also appear in SGM analysis \citep{song2021maximum}.

\section{Stochastic Optimal Control (SOC) in the Wild}\label{appendix:SOC-intro}
In this section, we are going to provide a gentle introduction of Stochastic Optimal Control (SOC). Our work is majorly relying on the prior work \cite{chen2015stochastic} in which some technical details are missing. Here we first clarify some core derivations that may help the broader audience to understand \cite{chen2015stochastic} and our work.

\subsection{Linear Quadratic Stochastic Optimal Control}
SOC has wide applications in finance, robotics, and manufacturing. Here we will focus on Linear Quadratic SOC which usually refers to Linear Quadratic Regulator because the dynamic is linear and the objective function is quadratic \citep{bryson1975applied,stengel1994optimal}. The problem states as:
\begin{equation}\label{appendix:dummy-system}
    \begin{aligned}
        &\min_{\rvu_t}\int_{0}^{1}\frac{1}{2}\norm{\rvu_t}_2^2\rd t+\rvx_1^\T R\rvx_1 \quad \\
         s.t \ \ \rd \rvx_t&=[A(t)\rvx_t+g_t\rvu_t]\rd t +g_t \rd w_t, \quad \rvx_0=x_0.\\
    \end{aligned}
\end{equation}
In this formulation, $\rvx_t$ means the state and $\rvu_t$ is the control variable. Conceptually, the SOC problem is aiming to design the controller $\rvu_t$ to drive the system from point $x_0$ to $x_1\equiv 0$ with minimum effort. In the case of first-order system, the control will be the optimal vector field $\rvv_t^{*}$ and for the second-order system, the control is denoted as the optimal acceleration $\rva_t^{*}$. The presence of stochasticity, introduced by the Wiener Process denoted as $\rd w_t$, prevents the system from precisely converging to the Dirac mass $x_1$. In order to strike a balance between the objective of converging to $x_1$ and minimizing overall control effort $\int\norm{\rvu_t}_2^{2}\rd t$, the terminal cost $\rvx_1^\T R \rvx_1$ has been imposed.

One special case is $R\rightarrow \infty$. Intuitively, it means the controlled dynamics should precisely converge to $x_1$. However, one can notice that the stochastic trajectory which connects $x_0$ and $x_1$ is not unique in this case. Based on this constraint (pinned down at $x_1$ and $x_0$ at two boundaries), the optimization problem of SOC finds the optimal solution with minimum effort $\rvu_t$ which can be understood as the regularization of the trajectories, hence, such stochastic trajectory is unique while the regularization of controller is still applied. One can also draw the connection with such pinned-down SDE with well-known Doob-$h$ transform. For the people who are not familiar with these, here are some interesting papers \citep{heng2021simulating,o2003conditioned}.

The classical procedure to solve the SOC problem includes:
\begin{enumerate}
    \item write down the Hamilton–Jacobi–Bellman equation (HJB PDE) which explicitly represents the propagation of value function over time.
    \item Construct the Ricatti/Lyapunov Equation.
    \item Solve Ricatti/Lyapunov Equation and obtain the optimal control.
\end{enumerate}

\subsection{Value Function, Hamilton-Jacobian (Hamilton–Jacobi–Bellman equation) and Ricatti Equation}
We adopt the classical notation in the SOC for the value function. Specifically, the underscript of the value function $V$ represents the partial derivative of it. For example, $V_t$, $V_x$ and $V_{xx}$ represent for the first order derivative of $V$ w.r.t time $t$ , state $\rvx$ and second order derivate of $V$ w.r.t $\rvx$. We first define the value function as:
\begin{align*}
    V(\rvx_t,t)=\inf_{\rvu}\mathbb{E}\left[\int_{t}^{1}\frac{1}{2}\norm{\rvu_t}_2^2\rd \tau+\rvx_1^\T R\rvx_1\right]
\end{align*}
and the dynamics is,
\begin{align*}
    \rd \rvx_t =(A\rvx_t+g_t \rvu_t)\rd t+ g_t \rd \rvw_t
\end{align*}

From Bellman's principle to the value function, one can get:
\begin{align*}
    V(t,\rvx_t)&=\inf_{\rvu}\mathbb{E}\left[V(t+\rd t,\rvx_{t+\rd t})+\int_{t}^{t+\rd t}\frac{1}{2}\norm{\rvu_t}_2^2\rd \tau\right]\\
    &=\inf_{\rvu}\mathbb{E}\left[\frac{1}{2}\norm{\rvu_t}_2^2 \rd t+V(t,\rvx_t)+V_t(t,\rvx_t)\rd t +V_x(t,\rvx)\rd \rvx+\frac{1}{2}tr\left[V_{xx}gg^\T\right]\rd t\right]\\
    &=\text{Plug in the dynamics $\rd \rvx_t=\cdots$}\\
    &=\inf_{\rvu}\mathbb{E}\left[\frac{1}{2}\norm{\rvu_t}_2^2 \rd t+V(t,\rvx_t)+V_t(t,\rvx_t)\rd t +V_x(t,\rvx)^\T ((A\rvx_t+g_t\rvu_t)dt+g\rd\rvw_t)\right.\\
    &\left.+\frac{1}{2}tr\left[V_{xx}gg^\T\right]\rd t\right]\\
    &=\inf_{\rvu}\left[\frac{1}{2}\norm{\rvu_t}_2^2 \rd t+V(t,\rvx_t)+V_t(t,\rvx_t)\rd t +V_x(t,\rvx)^\T (A\rvx_t+g_t\rvu_t)\rd t\right.\\
    &\left.+\frac{1}{2}tr\left[V_{xx}gg^\T\right]\rd t\right]
\end{align*}
One obtain:
\begin{align*}
    V_t+\inf_{\rvu}\left[\frac{1}{2}\norm{\rvu_t}_2^2+V_x^\T(A\rvx_t+g_t\rvu_t)\right]+\frac{1}{2}tr\left[V_{xx}gg^\T\right]=0
\end{align*}
The optimal control can be obtained by
\begin{align*}
    \rvu_t^{*}=-g_tV_{x}
\end{align*}
Plugging it back, one can obtain the HJB PDE:
\begin{align*}
    V_t-\frac{1}{2}V_xgg^\T V_x+V_x^\T A\rvx_t+\frac{1}{2}tr\left[V_{xx}gg^\T\right]=0
\end{align*}
We assume that there exist certain matrix $Q$, s.t. $V(\rvx,t)\equiv\frac{1}{2} \rvx^\T Q\rvx+\Xi(t)$.
By matching the different power terms of HJB, one can write:
\begin{align}
    -\dot{\Xi}-\frac{1}{2}\rvx^\T \dot{Q}\rvx= -\frac{1}{2}\rvx^\T Qgg^\T Q \rvx^\T+\rvx^\T A^\T Q\rvx+\frac{1}{2}tr\left[Q gg^\T\right]
\end{align}
with boundary condition:
\begin{align}
    \Xi(1)=0,\quad Q(1)=R
\end{align}
Due to the fact that $\rvx^\T A^\T Q\rvx=\rvx^\T Q A \rvx$, one arrives Riccati Equation:
\begin{empheq}[box=\widefbox]{align}\label{appendix:eq:Riccati}
    -\dot{Q}=A^\T Q+QA-Qgg^\T Q
\end{empheq} 
Recall that the optimal solution is $\rvu_t^{*}=-g_t V_x$ and $V:=\frac{1}{2} \rvx^\T Q\rvx+\Xi(t)$, the optimal control can be expressed in the way of the solution of Ricatti equation: $\rvu_t^{*}=-g^\T Q(t)\rvx_t$.

\subsection{Ricatti Equation and Lyapunov Equation}
Here we provide the connection between Ricatti Equation and Lyapunov Equation in the current setup.
\begin{lemma}\label{appendix:Lyapunov-ricatti}
    Define $P(t):=Q(t)^{-1}$ in which $Q(t)$ is the solution of Ricatti equation (eq.\ref{appendix:eq:Riccati}), Then $P(t)$ solve the Lyapunov equation:
    \begin{align}
        \dot{P}=AP+PA^\T-gg^\T
    \end{align}
\end{lemma}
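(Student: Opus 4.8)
\emph{Plan.} The proof is a short matrix computation: differentiate the relation defining $P(t)$ and substitute the Riccati equation. First I would recall the standard formula for the derivative of a matrix inverse. Differentiating the identity $P(t)Q(t) = I$ gives $\dot{P}Q + P\dot{Q} = 0$, hence
\begin{align*}
    \dot{P} = -P\,\dot{Q}\,Q^{-1} = -P\,\dot{Q}\,P .
\end{align*}
This manipulation requires only that $Q(t)$ be invertible and differentiable on the relevant interval, which is precisely what makes the definition $P(t) := Q(t)^{-1}$ meaningful; I would add a one-line remark that this is guaranteed by the well-posedness of the Riccati ODE (\ref{appendix:eq:Riccati}) together with the positive-definiteness of its solution under the stated assumptions (in particular $|g(t)|^2>0$).

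Next I would plug in the Riccati equation in the form $\dot{Q} = -\big(A^\T Q + QA - Qgg^\T Q\big)$ coming from (\ref{appendix:eq:Riccati}), which yields
\begin{align*}
    \dot{P} = P\big(A^\T Q + QA - Qgg^\T Q\big)P = P A^\T Q P + P Q A P - P Q gg^\T Q P .
\end{align*}
Then I would collapse each term using $QP = PQ = I$: the first term becomes $PA^\T$, the second becomes $AP$, and the third becomes $gg^\T$, giving
\begin{align*}
    \dot{P} = AP + PA^\T - gg^\T ,
\end{align*}
which is exactly the Lyapunov equation claimed.

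There is no real analytic obstacle here; the only things to be careful about are (i) the bookkeeping of which side $P$ and $Q$ sit on — since $A \neq A^\T$ in general, one must keep $PA^\T Q P$ and $PQAP$ distinct and simplify them separately — and (ii) the brief justification that $Q(t)$ remains invertible so that $P(t)$ is well defined and $C^1$. I would also note in passing how the boundary data transports: $P(1) = Q(1)^{-1} = R^{-1}$, which in the pinned-down regime $\rvr \to +\infty$ used in Proposition \ref{prop:phase-optimal-control} degenerates to $P(1) = \bzero$, the terminal condition against which the Lyapunov equation is actually integrated.
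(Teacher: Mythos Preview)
Your proposal is correct and follows essentially the same computation as the paper: both use the matrix-inverse derivative identity $\dot{(Q^{-1})} = -Q^{-1}\dot{Q}Q^{-1}$ and reduce one equation to the other via left/right multiplication by $Q$ or $P$. The only cosmetic difference is direction: the paper starts from the Lyapunov equation written in $Q^{-1}$ and verifies it is equivalent to the Riccati equation, whereas you start from the Riccati equation and derive the Lyapunov equation; your added remarks on invertibility and the terminal condition $P(1)=R^{-1}$ are more careful than what the paper provides but not required.
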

For notation consistency, we name the elements in $P$ matrix as,
\begin{align*}
    P=\begin{bmatrix}
        P_{00}&P_{01}\\
        P_{10}&P_{11}
    \end{bmatrix}
\end{align*}
\begin{proof}
    By plugging in the Lyapunov equation $P(t):=Q(t)^{-1}$, one can get:
    \begin{align*}
         \dot{Q^{-1}} &=AQ^{-1}+Q^{-1}A^\T-gg^\T\\
        \Leftrightarrow -Q^{-1}\dot{Q}Q^{-1}&=AQ^{-1}+Q^{-1}A^\T-gg^\T\\
        \Leftrightarrow -\dot{Q}&=QA+A^\T Q-Qgg^\T Q\\
    \end{align*}
\end{proof}
By Lemma.\ref{appendix:Lyapunov-ricatti}, the optimal control can also be represented as the solution of the Lyapunov equation: $\rvu_t^{*}=-g^\T P(t)^{-1}\rvx_t$ which is indeed the optimal control term used in \cite{chen2015stochastic} after adopting their notation, and it is same as the optimal control term we used in the Lemma.\ref{lemma:opt_ctr} without base dynamics compensation.

\subsection{SOC Connection with Schrödinger Bridge}
The optimal control solution is also the solution of Schrödinger Bridge when the terminal condition degenerates to the point mass (see example of Brownian Bridge in Appendix.\ref{appendix:SOC-BB}). It is also the solution of the Schrödinger Bridge when the optimal pairing is available to see proposition.2 \cite{de2023augmented}.

So in our case, we are \textbf{not} solving the momentum Schrödinger Bridge as shown in \cite{chen2023deep} (also see. fig.\ref{fig:mSBvsAGM}), even though the problem formulation is similar. Specifically, AGM is a special case of momentum Schrödinger Bridge when the boundary conditions are degenerated to Dirac Distributions.
\begin{figure}[H]
    \includegraphics[width=\textwidth]{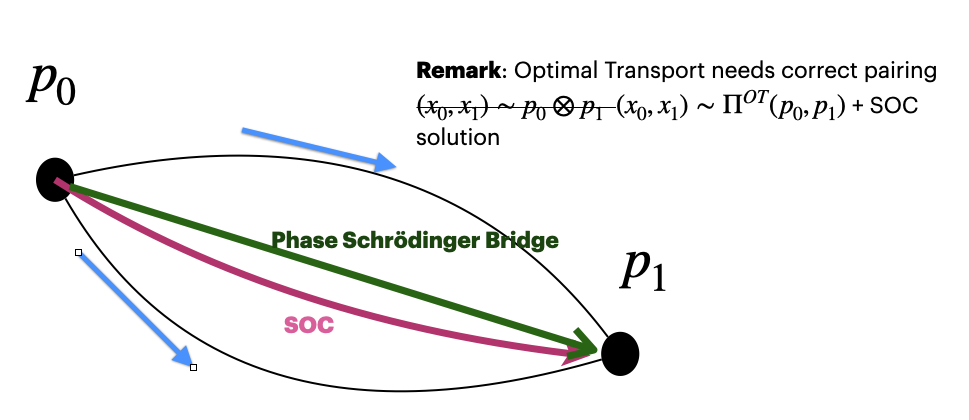}
    \caption{momentum Schrodinger Bridge versus AGM.}
    \label{fig:mSBvsAGM}
\end{figure}

\section{Technique Details in Section.\ref{sec:method}}\label{appendix:method-appendix}
\subsection{Brownian Bridge as the solution of Stochastic Optimal Control}\label{appendix:SOC-BB}
We adopt the presentation form \cite{kappen2008stochastic}. We consider the control problem:
\begin{align*}
    \min_{\rvu_t}\int_{t}^{1}\frac{1}{2}&\norm{\rvu_t}_2^2\rd t+\frac{\rvr}{2} \norm{\rvx_1-x_1}_2^2\\
    \text{s.t.}\quad  \rd \rvx_t &=\rvu_t \rd t, \quad \rvx_0 =x_0\\
\end{align*}
Where $\rvr$ is the terminal cost coefficient.
According to Pontryagin Maximum Principle (PMP;\cite{kirk2004optimal}) recipe, one can construct the Hamiltonian:
\begin{align*}
    H(t,\rvx,\rvu,\gamma)&=-\frac{1}{2}\norm{\rvu_t}_2^2+\gamma \rvu_t\\
\end{align*}
By setting:
\begin{align*}
    \frac{\partial H}{\partial \rvu_t}=0,
\end{align*}
the optimized Hamiltonian is:
\begin{align*}
    H(t,\rvx,\rvu,\gamma)^{*}&=\frac{1}{2}\gamma^2, \quad \text{where} \quad \rvu_t=\gamma\\
\end{align*}
Then we solve the Hamiltonian equation of motion:
\begin{align*}
    &\frac{\rd \rvx_t}{\rd t}=\frac{\partial H^*}{\partial \gamma}=\gamma\\
    &\frac{\rd \gamma}{\rd t}=\frac{\partial H^*}{\partial \rvx}=0\\
    \text{where} \quad \rvx_0&=x_0 \quad \text{and} \quad \gamma_1=-\rvr \cdot (\rvx_1-x_1)
\end{align*}
One can notice that the solution for $\gamma_t$ is the constant $\gamma_t=\gamma=-\rvr \cdot (\rvx_1-x_1)$, hence the solution for $\rvx_t$ is $\rvx_t=\rvx_1+\gamma t$.  
\begin{align*}
    \gamma&=-\rvr (\rvx_1 -x_1)=-\rvr(\rvx_0+ (1-t)\gamma-x_1)\\
    \rightarrow&\quad \rvu^{*}_t:=\gamma=\frac{\rvr(x_1-\rvx_0)}{1+\rvr(1-t)}
\end{align*}
When $\rvr \rightarrow +\infty$, we arrive the optimal control as $\rvu_t^{*}=\frac{x_1-\rvx_0}{1-t}$. Due to certainty equivalence, this is also the optimal control law for
\begin{align*}
    \rd\rvx_t =\rvu_t\rd t+\rd \rvw_t
\end{align*}
By plugging it back into the dynamics, we obtain the well-known Brownian Bridge:
\begin{align*}
    \rd\rvx_t=\frac{x_1-\rvx_t}{1-t}\rd t+\rd \rvw_t
\end{align*}
\begin{remark}
    If there is not stochasticity $\rd \rvw_t$, one can get $\rvu_t:=\frac{x_1-\rvx_t}{1-t}=x_1-\rvx_0$ which is the vector field constructed by \cite{lipman2022flow} during \textbf{traning}.
\end{remark}
\subsection{Proof of Proposition.\ref{prop:phase-optimal-control}}\label{Appendix:prop:phase-optimal-control}
\begin{proposition}
    The solution of the stochastic bridge problem of linear momentum system \citep{chen2015stochastic} is
    \begin{align}
        \rva^{*}(\rvm_t,t)=g_t^2P_{11}\left(\frac{\rvx_1-\rvx_t}{1-t}-\rvv_t\right) \quad \text{where}: \quad 
        P_{11}=\frac{-4}{g_t^2(t-1)}.
    \end{align}     
\end{proposition}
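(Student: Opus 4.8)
The plan is to follow the classical three-step SOC recipe laid out in Appendix~\ref{appendix:SOC-intro}, specialized to the linear momentum system in Definition~\ref{eq:stochastic-bridge}. First I would identify the drift matrix $A$ and noise matrix for the phase-space dynamics: writing $\rvm_t = [\rvx_t,\rvv_t]^\T$, the uncontrolled part has $A = \left[\begin{smallmatrix} 0 & 1 \\ 0 & 0 \end{smallmatrix}\right]\otimes \mI_d$ (so that $\rd\rvx_t = \rvv_t\,\rd t$), the control enters only the velocity channel, and $gg^\T = \left[\begin{smallmatrix} 0 & 0 \\ 0 & g_t^2 \end{smallmatrix}\right]\otimes\mI_d$. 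By Lemma~\ref{appendix:Lyapunov-ricatti} (invoked as Lemma~\ref{lemma:lyapunov_equation} in the main text), instead of solving the Riccati equation for $Q(t)$ I would solve the Lyapunov equation $\dot P = AP + PA^\T - gg^\T$ for $P(t) = Q(t)^{-1}$, with terminal condition coming from $\rvr\to+\infty$, i.e. $Q(1)=R\to\infty$, hence $P(1) = 0$.

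Next I would solve this $2\times 2$ (per coordinate) Lyapunov ODE explicitly. Writing $P = \left[\begin{smallmatrix} P_{00} & P_{01} \\ P_{10} & P_{11}\end{smallmatrix}\right]$ and using $AP = \left[\begin{smallmatrix} P_{10} & P_{11} \\ 0 & 0\end{smallmatrix}\right]$, the system becomes $\dot P_{00} = 2P_{01}$, $\dot P_{01} = P_{11}$, $\dot P_{11} = -g_t^2$, all with zero terminal data at $t=1$. Integrating backward from $t=1$: $P_{11}(t) = \int_t^1 g_s^2\,\rd s$, and one continues upward to get $P_{01}$ and $P_{00}$. The statement of the proposition quotes $P_{11} = \tfrac{-4}{g_t^2(t-1)} = \tfrac{4}{g_t^2(1-t)}$; this should follow from the integral with the paper's choice $g_t = 3(1-t)$, since $\int_t^1 9(1-s)^2\,\rd s = 3(1-t)^3$, and then one would have to reconcile that $3(1-t)^3$ against $\tfrac{4}{9(1-t)^3}$ — so I anticipate the actual argument needs the formulation where $P_{11}$ is reported as a \emph{coefficient} appearing in the final control, not the raw Lyapunov solution, or equivalently that the quoted $P_{11}$ already absorbs the normalization so that $g_t^2 P_{11}$ collapses to $4/(1-t)^2$; I would carefully track which quantity the remark ("$P_{11}$ denotes the second diagonal component of $P_t$") actually refers to and present the computation accordingly.

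Then, recalling from the HJB derivation that the optimal control is $\rvu_t^* = -g^\T Q(t)\rvm_t = -g^\T P(t)^{-1}\rvm_t$ but adjusted for the nonzero terminal target $m_1$ with $x_1\sim\pdata$ and the time-varying terminal velocity $\rvv_1 = (\rvx_1-\rvx_t)/(1-t)$, I would assemble the control as acting on the \emph{deviation} $\rvm_t$ from the pinned trajectory. Because the noise enters only the velocity row, $g^\T$ picks out the $(\cdot,\text{velocity})$ entries of $Q = P^{-1}$, which pairs $P_{11}$ (through the inverse) with the velocity-error term $\bigl(\tfrac{\rvx_1-\rvx_t}{1-t} - \rvv_t\bigr)$; this produces exactly $\rva^*(\rvm_t,t) = g_t^2 P_{11}\bigl(\tfrac{\rvx_1-\rvx_t}{1-t} - \rvv_t\bigr)$. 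I would verify the limiting behavior as $\rvr\to+\infty$ (so $P(1)=0$) is what makes the expression well-defined and the trajectory pin down at $\rvx_1$, mirroring the scalar Brownian-bridge computation in Appendix~\ref{appendix:SOC-BB}.

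The main obstacle I expect is bookkeeping the limit $\rvr\to\infty$ together with the time-varying terminal velocity constraint: unlike the standard LQR setup where the target is a fixed point, here the "target" $m_1 = [x_1, (\rvx_1-\rvx_t)/(1-t)]$ has a velocity component that itself depends on the current state, so I must be careful that the Lyapunov/Riccati machinery still applies (it does, because this is still a linear feedback law in the deviation variables, and the velocity target is the optimal control of the \emph{position} sub-problem — the Brownian bridge drift — so the structure is consistent). Verifying that the $P(1)=0$ boundary condition yields a finite, well-posed feedback gain $g_t^2 P_{11}$ on $[\tau,1)$ and correctly recovers $P_{11}$'s stated closed form is the crux; the rest is routine ODE integration and matrix algebra.
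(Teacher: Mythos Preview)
Your overall approach is the paper's: solve the Lyapunov equation for $\rmP_t$ with $\rmP_1=0$, write the feedback law $\rvu_t^* = -\rvg\rvg^\T \rmP_t^{-1}(\rvm_t - \Phi(t,1)\rvm_1)$ from Lemma~\ref{lemma:opt_ctr}, and then substitute the chosen terminal velocity $\rvv_1=(\rvx_1-\rvx_t)/(1-t)$. But two points in your plan remain unresolved and would stall the write-up.

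First, the symbol $P_{11}$ in the proposition denotes the $(1,1)$ entry of $\rmP_t^{-1}$, not of the Lyapunov solution $\rmP_t$ itself (the main-text remark is misleading on this; see Lemma~\ref{lemma:lyapunov_equation}, where $P_{10},P_{11}$ are read off only after the inverse is displayed). Your backward integration correctly produces the entries of $\rmP_t$; with constant $g$ this gives $\rmP_t=\left[\begin{smallmatrix} -\frac{1}{3}g^2(t-1)^3 & -\frac{1}{2}g^2(t-1)^2 \\ -\frac{1}{2}g^2(t-1)^2 & g^2(1-t)\end{smallmatrix}\right]$, whose determinant is $\tfrac{1}{12}g^4(t-1)^4$, so $(\rmP_t^{-1})_{11}=-4/(g^2(t-1))$ on the nose. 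There is no ``absorbed normalization'' to hunt for; you were comparing against the wrong matrix entry. (Incidentally $g_t^2P_{11}=4/(1-t)$, not $4/(1-t)^2$.)

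Second, the sentence ``$g^\T$ picks out the $(\cdot,\text{velocity})$ entries of $P^{-1}$, which pairs $P_{11}$ with the velocity-error term'' skips the actual content. Multiplying out $-\rvg\rvg^\T\rmP_t^{-1}(\rvm_t-\Phi(t,1)\rvm_1)$ with $\Phi(t,1)=\left[\begin{smallmatrix}1 & t-1\\ 0 & 1\end{smallmatrix}\right]$ gives, in the velocity row,
\[
g_t^2 P_{10}(\rvx_1-\rvx_t)\;+\;g_t^2 P_{10}(t-1)\,\rvv_1\;+\;g_t^2 P_{11}(\rvv_1-\rvv_t),
\]
which contains both $P_{10}$ and $P_{11}$. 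Only \emph{after} inserting $\rvv_1=(\rvx_1-\rvx_t)/(1-t)$ do the two $P_{10}$ terms cancel identically, leaving the single $P_{11}$ term claimed. This cancellation is precisely what dissolves your worry about the state-dependent terminal velocity: it is a three-line algebraic identity, not a structural fact about LQR, so the Riccati/Lyapunov machinery needs no separate justification here.
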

\begin{proof}
    From Lemma.\ref{lemma:opt_ctr}, one can get the optimal control for this problem is
    \begin{align*} 
        \rvu^{*}_t = -\rvg\rvg^\T\rmP_t^{-1}\left(\rvm_t-\Phi(t,1)\rvm_1\right)
      \end{align*}
    where state transition function $\Phi$ can be obtained from Lemma.\ref{lemma:transition_function} and $\rmP_t$ is the solution of Lyapunov equation and $\rmP_t^{-1}$ can be found in Lemma.\ref{lemma:lyapunov_equation}.

    Then we have:
    \begin{align*}
        \rvu_t^{*}  &= -\rvg\rvg^\T\rmP_t^{-1}\left(\rvm_t-\Phi(t,1)\rvm_1\right)\\
                    &= -\rvg\rvg^\T\rmP_t^{-1}\rvm_t+\rvg\rvg^\T\rmP_t^{-1}\Phi(t,1)\rvm_1\\
                    &= -\begin{bmatrix}
                        0&0\\
                        0&g^2\\
                    \end{bmatrix}\rmP_t^{-1}\rvm_t+\rvg\rvg^\T\rmP_t^{-1}\begin{bmatrix}
                        1& t-1\\
                        0&1\\
                    \end{bmatrix}\rvm_1\\
                    &= -g_t^2\begin{bmatrix}
                        0&0\\
                        P_{10}&P_{11}\\
                    \end{bmatrix}\rvm_t+\begin{bmatrix}
                        0&0\\
                        0&g^2_t
                    \end{bmatrix}\begin{bmatrix}
                        P_{00} & P_{01}\\
                        P_{10} & P_{11}\\
                    \end{bmatrix}\begin{bmatrix}
                        1& t-1\\
                        0&1\\
                    \end{bmatrix}\rvm_1\\
                    &= -g_t^2\begin{bmatrix}
                        0&0\\
                        P_{10}&P_{11}\\
                    \end{bmatrix}\rvm_t+g_t^2\begin{bmatrix}
                        0& 0\\
                        P_{10}&P_{11}\\
                    \end{bmatrix}\begin{bmatrix}
                        1& t-1\\
                        0&1\\
                    \end{bmatrix}\rvm_1\\
                    &= -g_t^2\begin{bmatrix}
                        0&0\\
                        P_{10}&P_{11}\\
                    \end{bmatrix}\rvm_t+g_t^2\begin{bmatrix}
                        0& 0\\
                        P_{10}&P_{10}(t-1)+P_{11}\\
                    \end{bmatrix}\rvm_1\\
                    &= \begin{bmatrix}
                        0\\
                        g_t^2P_{10}(\rvx_1-\rvx_t)+g_t^2P_{10}(t-1)\cdot\rvv_1+g_t^2P_{11}(\rvv_1-\rvv_t)
                    \end{bmatrix}\\
                    &\text{Plug in } \rvv_1:=\frac{\rvx_1-\rvx_t}{1-t}\\
                    &= \begin{bmatrix}
                        0\\
                        g_t^2P_{11}\left(\frac{\rvx_1-\rvx_t}{1-t}-\rvv_t\right)
                    \end{bmatrix}\\
    \end{align*}

\end{proof}
\begin{lemma}\label{lemma:lyapunov_equation}
     The Lyapunov equation corresponding to the optimization problem showed in Lemma.\ref{lemma:opt_ctr}:
    \begin{align*}
        \rvu^{*}_t&\in \argmin_{\rvu_t \in \mathcal{U}}\E\left[\int_{0}^{T}\frac{1}{2}\norm{\rvu_t}^2\right]\rd t+\rvx_1^\T \rmR \rvx_1\\
        s.t \quad &\rd \rvm_t =\underbrace{\begin{bmatrix}
          0&1\\
          0&0\\
        \end{bmatrix}}_{ A}\rvm_t\rd t+\rvu_t\rd t+\rvg \rd \rvw_t\\
        \quad &\rvm_0 = m_0, \quad \rvm_1 =m_1
      \end{align*}
    is depited as
    \begin{align}
        \dot{\rmP} =  A \rmP + \rmP A^\T - \vg\vg^T \label{eq:lyapunov_equation}.
      \end{align}
    When $\vg = \begin{bmatrix}
            0\\
            g\\
        \end{bmatrix}$, the solution for Lyapunov equation above,
    with terminal condition
    \begin{align}
      \rmP_1 = \rmR^{-1}=\lim_{\rvr\rightarrow\inf}\begin{bmatrix}
        \rvr&0\\
        0&\rvr\\
      \end{bmatrix}^{-1}=\begin{bmatrix}
        0&0\\
        0&0\\
      \end{bmatrix}
    \end{align}
    However, one does not need the force to converge exactly at $\rvv_1$ because we only care about the generated quality of $\rvx_1$. Here we give a general case in which the $\rvr$ keeps a small value $\omega$ for the velocity channel:
    \begin{align}
      \rmP_1 = \rmR^{-1}=\begin{bmatrix}
        0&0\\
        0&\omega\\
      \end{bmatrix}
    \end{align}
    Then the solution is given by
    \begin{align*}
      \rmP_t =\begin{bmatrix}
          &\omega(t-1)^2-\frac{1}{3}g^2(t-1)^3&\omega(t-1)-\frac{1}{2}g^2(t-1)^2\\
          &\omega(t-1)-\frac{1}{2}g^2(t-1)^2  &g^2(1-t)+\omega\\
      \end{bmatrix}
    \end{align*}
    and the inverse of $\rmP_t$ is,
    \begin{align*}
      \rmP_t^{-1} &=\frac{1}{g^2(-4\omega+g^2(t-1))(t-1)}\begin{bmatrix}
          &\frac{12(\omega-g^2(t-1))}{(t-1)^2}                          &\frac{6(-2\omega+g^2(t-1))}{t-1}\\
          &\frac{6(-2\omega+g^2(-1+t))}{t-1}                    &12\omega-4g^2(t-1)\\
      \end{bmatrix}\\
    \end{align*}

    Thus,
    \begin{align*}
    P_{10}&=\frac{-12\omega+6g^2(t-1)}{g^2[-4\omega+g^2(t-1)](t-1)^2}=\frac{-12\omega}{g^2[-4\omega+g^2(t-1)](t-1)^2}+
                                                            \frac{6}{[-4\omega+g^2(t-1)](t-1)}\\
    P_{11}&=\frac{12\omega-4g^2(t-1)}{g^2[-4\omega+g^2(t-1)](t-1)}=\frac{12\omega}{g^2[-4\omega+g^2(t-1)](t-1)}+
                                                        \frac{-4}{[-4\omega+g^2(t-1)]}\\
    \end{align*}
  \end{lemma}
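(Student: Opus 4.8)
The plan is to split the statement into two essentially independent tasks: (i) deriving the matrix Lyapunov ODE \eqref{eq:lyapunov_equation}, and (ii) integrating it in closed form together with its inverse. For (i) the quickest route is Lemma~\ref{appendix:Lyapunov-ricatti}: the control problem in the statement is linear--quadratic, so its value function is quadratic and the associated Riccati matrix $\rmQ_t$ obeys $-\dot{\rmQ}=\rmQ A+A^{\T}\rmQ-\rmQ\vg\vg^{\T}\rmQ$ with $\rmQ_1=\rmR$; setting $\rmP_t:=\rmQ_t^{-1}$ and using $\dot{\rmP}=-\rmP\dot{\rmQ}\rmP$ converts this into $\dot{\rmP}=A\rmP+\rmP A^{\T}-\vg\vg^{\T}$ with $\rmP_1=\rmR^{-1}$. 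Substituting $A=\big[\begin{smallmatrix}0&1\\0&0\end{smallmatrix}\big]$ and $\vg\vg^{\T}=\big[\begin{smallmatrix}0&0\\0&g^{2}\end{smallmatrix}\big]$ is all that is needed.

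Next I would solve this ODE componentwise, taking $g$ constant (for time-varying $g_t$ one simply replaces the closed forms below by the corresponding time integrals). Writing $\rmP_t=\big[\begin{smallmatrix}P_{00}&P_{01}\\P_{01}&P_{11}\end{smallmatrix}\big]$ --- symmetric, being a covariance-type matrix --- a one-line computation gives $A\rmP+\rmP A^{\T}-\vg\vg^{\T}=\big[\begin{smallmatrix}2P_{01}&P_{11}\\P_{11}&-g^{2}\end{smallmatrix}\big]$, so the matrix ODE reduces to the triangular cascade $\dot P_{11}=-g^{2}$, $\dot P_{01}=P_{11}$, $\dot P_{00}=2P_{01}$. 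Integrating these \emph{backward} from $t=1$ against $\rmP_1=\big[\begin{smallmatrix}0&0\\0&\omega\end{smallmatrix}\big]$ (the fully pinned case $\rvr\to\infty$ being $\omega=0$) gives, in order, $P_{11}=\omega+g^{2}(1-t)$, then $P_{01}=\omega(t-1)-\tfrac{1}{2}g^{2}(t-1)^{2}$, then $P_{00}=\omega(t-1)^{2}-\tfrac{1}{3}g^{2}(t-1)^{3}$, which is exactly the claimed $\rmP_t$.

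The final step is inversion via the $2\times2$ adjugate rule. I would compute $\det\rmP_t=P_{00}P_{11}-P_{01}^{2}$; pulling out the common factor $(t-1)^{2}$ and collecting powers of $g^{2}(t-1)$ collapses it to $\det\rmP_t=\tfrac{1}{12}g^{2}(t-1)^{3}\big(g^{2}(t-1)-4\omega\big)$, and dividing $\big[\begin{smallmatrix}P_{11}&-P_{01}\\-P_{01}&P_{00}\end{smallmatrix}\big]$ by it gives the stated $\rmP_t^{-1}$. Reading off its $(2,1)$ and $(2,2)$ entries --- these are the scalars denoted $P_{10}$ and $P_{11}$ in the optimal control law, since $\vg\vg^{\T}\rmP_t^{-1}$ picks out the bottom row --- yields the two displayed formulas, whose partial-fraction splits in the variable $-4\omega+g^{2}(t-1)$ take one line of algebra; letting $\omega\to0$ then reduces $P_{11}$ to $-4/\big(g^{2}(t-1)\big)$, matching Proposition~\ref{prop:phase-optimal-control}.

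I expect the only real difficulty to be bookkeeping rather than anything conceptual: the integration runs backward from the terminal time, so sign slips are easy, and one must keep in mind that the quantities written $P_{10},P_{11}$ in the conclusion --- and used downstream in Proposition~\ref{prop:phase-optimal-control} --- are entries of $\rmP_t^{-1}$, not of $\rmP_t$ itself. Once the componentwise ODEs are in hand, every subsequent equality is forced.
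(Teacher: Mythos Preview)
Your proposal is correct and in fact more constructive than the paper's own argument. The paper's proof consists of a single line: ``One can plug in the solution of $\rmP_t$ into the Lyapunov equation and it validates $\rmP_t$ is indeed the solution.'' That is, the paper treats the closed form for $\rmP_t$ as an ansatz and verifies it by substitution, leaving the derivation of the Lyapunov ODE itself to the earlier Lemma~\ref{appendix:Lyapunov-ricatti} and the inverse/entry formulas as unremarked algebra. You instead (i) invoke Lemma~\ref{appendix:Lyapunov-ricatti} to obtain the Lyapunov ODE from the Riccati equation, (ii) reduce it to the triangular cascade $\dot P_{11}=-g^{2}$, $\dot P_{01}=P_{11}$, $\dot P_{00}=2P_{01}$ and integrate backward from $t=1$, and (iii) compute $\det\rmP_t=\tfrac{1}{12}g^{2}(t-1)^{3}\big(g^{2}(t-1)-4\omega\big)$ and the adjugate explicitly. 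Your route explains \emph{where} the formula comes from and makes the $\omega\to0$ specialization transparent; the paper's route is shorter but opaque. Both are valid, and your cautionary note that the symbols $P_{10},P_{11}$ in the conclusion refer to entries of $\rmP_t^{-1}$ rather than $\rmP_t$ is a genuinely useful clarification that the paper leaves implicit.
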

\begin{proof}
    One can plug in the solution of $\rmP_t$ into the Lyapunov equation $\rmP_t$ and it validates $\rmP_t$ is indeed the solution.
    \begin{remark}
    Here we provide a general form when the terminal condition of the Lyapunov function is not a zero matrix. It explicitly means that it allows that the velocity does not necessarily need to converge to the exact predefined $\rvv_1$. It will have the same results as shown in the paper by setting $\omega=0$.
    \end{remark}
\end{proof}

\begin{lemma}\label{lemma:transition_function}
    The state transition function $\Phi(t,s)$ of following dynamics,
    \begin{align*}
        \quad &\rd \rvm_t =\begin{bmatrix}
          0&1\\
          0&0\\
        \end{bmatrix}\rvm_t\rd t
    \end{align*}
    is,
    \begin{align*} 
        \Phi(t,s)=\begin{bmatrix}
            1&t-s\\
            0&1\\
        \end{bmatrix}
    \end{align*}
  \end{lemma}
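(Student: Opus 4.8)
The plan is to recall the standard fact that for a linear time-invariant system $\rd \rvm_t = A\,\rvm_t\,\rd t$ with constant matrix $A$, the state transition function is the matrix exponential $\Phi(t,s) = e^{A(t-s)}$, characterized uniquely as the solution of the matrix ODE $\partial_t \Phi(t,s) = A\,\Phi(t,s)$ with initial condition $\Phi(s,s) = \rmI$. So the first step is to instantiate $A = \begin{bmatrix} 0 & 1 \\ 0 & 0 \end{bmatrix}$ and compute $e^{A(t-s)}$ explicitly.

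The key observation is that this $A$ is nilpotent: $A^2 = \begin{bmatrix} 0 & 1 \\ 0 & 0 \end{bmatrix}\begin{bmatrix} 0 & 1 \\ 0 & 0 \end{bmatrix} = \begin{bmatrix} 0 & 0 \\ 0 & 0 \end{bmatrix}$, hence $A^k = 0$ for all $k \ge 2$. Therefore the exponential power series truncates after the linear term, $e^{A(t-s)} = \rmI + (t-s)A = \begin{bmatrix} 1 & t-s \\ 0 & 1 \end{bmatrix}$, which is exactly the claimed formula.

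Alternatively — and this is probably the cleanest way to present it in the appendix — I would simply verify that the stated candidate $\Phi(t,s) = \begin{bmatrix} 1 & t-s \\ 0 & 1 \end{bmatrix}$ satisfies the two defining properties and then invoke uniqueness: (i) setting $t = s$ gives $\Phi(s,s) = \rmI$; (ii) differentiating in $t$ gives $\partial_t \Phi(t,s) = \begin{bmatrix} 0 & 1 \\ 0 & 0 \end{bmatrix} = A$, while $A\,\Phi(t,s) = A(\rmI + (t-s)A) = A + (t-s)A^2 = A$ by nilpotency, so $\partial_t \Phi(t,s) = A\,\Phi(t,s)$. Uniqueness of solutions to this linear matrix ODE then identifies the candidate with the transition function.

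There is no real obstacle here; the computation is immediate once nilpotency is noted. The only point to watch is the sign/order convention for $\Phi(t,s)$ (propagation from time $s$ to time $t$), so that the off-diagonal entry reads $t - s$ rather than $s - t$, matching the usage of $\Phi$ in Proposition~\ref{prop:phase-optimal-control} and in the exponential-integrator sampler of Eq.~\eqref{eq:EI-sampling}.
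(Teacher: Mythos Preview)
Your proposal is correct and aligns with the paper's own proof, which simply states that one verifies $\partial \Phi/\partial t = \begin{bmatrix}0&1\\0&0\end{bmatrix}\Phi$. Your version is in fact more complete than the paper's, since you also check the initial condition $\Phi(s,s)=\rmI$ and explicitly invoke uniqueness, and you additionally offer the direct nilpotent matrix-exponential computation as an alternative derivation.
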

  \begin{proof}
    One can easily verify that such $\Phi$ satisfies $\partial \Phi/\partial t = \begin{bmatrix}
          0&1\\
          0&0\\
        \end{bmatrix}\Phi$.
\end{proof}

\begin{lemma}[\cite{chen2015stochastic}]\label{lemma:opt_ctr}
    When $R\rightarrow \infty$, The optimal control $\rvu^{*}_t$ of following problem,
    \begin{align*}
      \rvu^{*}_t\:=\begin{bmatrix}
        \bzero\\
        \rva_t
      \end{bmatrix}&\in \argmin_{\rvu_t \in \mathcal{U}}\int_{0}^{T}\frac{1}{2}\norm{\rvu_t}^2\rd t+\rvx_1^\T R \rvx_1\\
      s.t \quad &\rd \rvm_t =\begin{bmatrix}
        0&1\\
        0&0\\
      \end{bmatrix}\rvm_t\rd t+\rvu_t\rd t+\rvg_t \rd \rvw_t\\
      \quad &\rvm_0 = m_0
    \end{align*}
     is given by
    \begin{align*} 
      \rvu^{*}_t = -\rvg\rvg^\T\rmP_t^{-1}\left(\rvm_t-\Phi(t,1)\rvm_1\right)
    \end{align*}
     Where $\rmP_t$ follows Lyapunov equation (eq.\ref{eq:lyapunov_equation}) with boundary condition $\rmP_1=\mathbf{0}$. and function $\Phi(t,s)$ is the transition matrix from time-step $s$ to time-step $t$ given uncontrolled dynamics.

     And it is indeed the stochastic bridge of the following system:
     \begin{align}
         \quad &\rd \rvm_t =\begin{bmatrix}
        0&1\\
        0&0\\
      \end{bmatrix}\rvm_t\rd t+\rvu_t\rd t+g\rd\rvw_t\\
      \quad &\rvm_0 = m_0, \quad \rvm_1 =m_1
     \end{align}
     
\end{lemma}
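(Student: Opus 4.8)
I would prove this by carrying out the classical linear--quadratic recipe (HJB equation $\to$ Riccati equation $\to$ Lyapunov equation) already sketched for the scalar case in Appendix~\ref{appendix:SOC-intro}, now adapted to the phase-space system with the nilpotent drift $A=\bigl[\begin{smallmatrix}0&1\\0&0\end{smallmatrix}\bigr]$ and the degenerate noise matrix $\rvg_t$ that acts only on the velocity coordinate. The first step is to turn the target-tracking problem into a pure regulation problem: since $\Phi(t,1)\rvm_1$ is the solution of the uncontrolled ODE $\dot\rvm=A\rvm$ through the terminal point (Lemma~\ref{lemma:transition_function} gives $\Phi$ explicitly), the shifted variable $\delta\rvm_t:=\rvm_t-\Phi(t,1)\rvm_1$ satisfies the \emph{same} controlled dynamics with $\delta\rvm_1=\bzero$ and terminal cost $\delta\rvm_1^\T\rmR\,\delta\rvm_1$, so it suffices to solve the drive-to-origin problem in $\delta\rvm$.

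Next I would write the HJB equation for the value function $V(\delta\rvm,t)=\inf\E[\int_t^1\tfrac12\norm{\rvu_s}_2^2\rd s+\delta\rvm_1^\T\rmR\,\delta\rvm_1]$. As in Appendix~\ref{appendix:SOC-intro} the control enters the dynamics scaled by $\rvg_t$, so the pointwise minimization over $\rvu$ gives $\rvu^*=-\rvg_t^\T\nabla V$ and reduces the HJB to $V_t+\nabla V^\T A\delta\rvm-\tfrac12\nabla V^\T\rvg_t\rvg_t^\T\nabla V+\tfrac12\Tr[\nabla^2 V\,\rvg_t\rvg_t^\T]=0$; because $\rvg_t\rvg_t^\T$ annihilates the position block, the resulting feedback lives entirely in the velocity channel, i.e.\ has the form $[\bzero;\rva_t]$ as claimed. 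The quadratic ansatz $V(\delta\rvm,t)=\tfrac12\delta\rvm^\T\rmQ_t\delta\rvm+\Xi(t)$ then matches term by term to the matrix Riccati equation $-\dot\rmQ_t=A^\T\rmQ_t+\rmQ_t A-\rmQ_t\rvg_t\rvg_t^\T\rmQ_t$ with $\rmQ_1=\rmR$, and exactly as in Lemma~\ref{appendix:Lyapunov-ricatti} the substitution $\rmP_t:=\rmQ_t^{-1}$ converts this into the Lyapunov equation~\eqref{eq:lyapunov_equation}, $\dot\rmP_t=A\rmP_t+\rmP_tA^\T-\rvg_t\rvg_t^\T$, with $\rmP_1=\rmR^{-1}\to\bzero$ as $\rvr\to\infty$. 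Substituting $\nabla V=\rmP_t^{-1}\delta\rvm$ back and undoing the shift produces the claimed optimal control $\rvu^*_t=-\rvg\rvg^\T\rmP_t^{-1}\bigl(\rvm_t-\Phi(t,1)\rvm_1\bigr)$.

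The main obstacle is the degeneracy of the problem: $\rvg_t\rvg_t^\T$ is rank-deficient and, in the limit $\rvr\to\infty$, the terminal matrix $\rmP_1=\bzero$ is singular, so one must show $\rmP_t$ is invertible for every $t\in[0,1)$ before the feedback $\rvg\rvg^\T\rmP_t^{-1}$ even makes sense. This is precisely the Kalman controllability (hypoellipticity) of the pair $(A,\rvg_t)$: noise enters velocity and $A$ feeds velocity into position, so after any positive elapsed time the reachable set is full-dimensional; concretely, the explicit $\rmP_t$ of Lemma~\ref{lemma:lyapunov_equation} has determinant a positive multiple of $(1-t)^4$ on $[0,1)$, vanishing only at $t=1$, so $\rmP_t^{-1}$ exists on $[0,1)$ and the closed-loop (linear) SDE is well posed there. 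Once this is secured, a standard verification argument --- apply It\^o's formula to $V(\delta\rvm_t,t)$ and use the HJB identity to show $V$ lower-bounds the cost of any admissible control with equality for $\rvu^*$, licensed by the well-posedness assumptions of Appendix~\ref{appendix:assumption} --- confirms optimality.

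Finally, for the ``stochastic bridge'' claim I would argue as follows: as $\rvr\to\infty$ the terminal penalty is finite only if $\rvm_1=m_1$ almost surely, so the optimally controlled process is pinned at both $m_0$ and $m_1$; among all controls enforcing this pinning, minimizing $\int\norm{\rvu_t}_2^2\rd t$ singles out the law of the reference SDE $\rd\rvm_t=A\rvm_t\rd t+\rvg_t\rd\rvw_t$ conditioned on its two endpoints, i.e.\ its Doob $h$-transform --- the same mechanism that produces the Brownian bridge in Appendix~\ref{appendix:SOC-BB}. Alternatively, since everything is a linear Gaussian SDE, one can verify the identification by directly computing and comparing the Gaussian marginals of the two processes.
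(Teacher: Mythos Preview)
Your approach is correct and, in fact, far more detailed than what the paper offers: the paper's entire ``proof'' of this lemma is the single line ``See page~8 in \cite{chen2015stochastic}.'' What you have written is a self-contained execution of the linear--quadratic recipe that the paper only sketches in Appendix~\ref{appendix:SOC-intro} for the scalar case --- the shift by $\Phi(t,1)\rvm_1$, the HJB equation with the quadratic ansatz, the Riccati-to-Lyapunov inversion of Lemma~\ref{appendix:Lyapunov-ricatti}, and the limit $\rmR^{-1}\to\bzero$ --- so in spirit you are following exactly the route the authors intend, just actually carrying it out.

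Two remarks. First, you go beyond the paper by explicitly flagging the degeneracy issue (rank-deficient $\rvg_t\rvg_t^\T$, singular terminal $\rmP_1$) and resolving it via Kalman controllability and the explicit determinant from Lemma~\ref{lemma:lyapunov_equation}; the paper never discusses this, so your treatment is strictly more careful. Second, be aware of a small notational wrinkle: the lemma as stated writes the dynamics with $+\rvu_t\rd t$ (control entering directly), whereas Appendix~\ref{appendix:SOC-intro} and your derivation use $+\rvg_t\rvu_t\rd t$ (control scaled by $\rvg_t$). The two are equivalent after reparametrization and produce the same closed-loop force $-\rvg\rvg^\T\rmP_t^{-1}(\cdots)$, but it is worth stating which convention you adopt so the reader is not confused by the apparent mismatch with the displayed SDE in the lemma.
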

\begin{proof}
    See page 8 in \cite{chen2015stochastic}.
\end{proof}

\subsection{Mean and Covariance of SDE}\label{Appendix:mean-cov}
By plugging the optimal control into the system, one can obtain the system as:
    \begin{align*}
        \rd \rvm_t  &= \begin{bmatrix}
                        \rvv_t\\
                        \rmF_t\\
                    \end{bmatrix}\rd t+\rvg_t \rd \rvw_t\\
                    &= \begin{bmatrix}
                        \rvv_t\\
                        g_t^2P_{11}\left(\frac{\rvx_1-\rvx_t}{1-t}-\rvv_t\right)\\
                    \end{bmatrix}\rd t+\rvg_t \rd \rvw_t\\
                    &= \underbrace{\begin{bmatrix}
                        \bzero & \mathbf{1}\\
                        -\frac{g_t^{2}P_{11}}{1-t}& -g_t^{2}P_{11}\\
                    \end{bmatrix}}_{\tilde{\rmF_t}}\begin{bmatrix}
                        \rvx_t\\
                        \rvv_t
                    \end{bmatrix}\rd t+\underbrace{\begin{bmatrix}
                        \bzero\\
                        \frac{g_t^{2}P_{11}}{1-t}\rvx_1
                    \end{bmatrix}}_{\tilde{\rmD}_t}\rd t+\rvg_t \rd \rvw_t\\
    \end{align*}
We follow the recipe of \cite{sarkka2019applied}. The mean $\bmut$ and variance $\bSigmat$ of the matrix of random variable $\rvm_t$ obey the following respective ordinary differential equations (ODEs):
    \begin{align*}\label{eq:mean_variance_odes}
      \rd \bmut
       &=\tilde{\rmF}_t\bmut\rd t+
            \tilde{\rmD}_t\rd t\\
      \rd \bSigmat
      &=\tilde{\rmF}_t\bSigmat\rd t
      +\left[\tilde{\rmF}_t\bSigmat\right]^\T\rd t
      +\rvg\rvg^\T \rd t\\
  \end{align*}
  One can solve it by numerically simulating two ODEs whose dimension is just two. Or one can use software such as \cite{MATLAB} to get analytic solutions. If you opt to the later approach, you can get:
  \begin{align*}
    \muxt&=\frac{1}{3}\rvx_1 t^2(t^2 - 4t + 6)\\
    \muvt&=\frac{4t \rvx_1}{3}(t^2 - 3t + 3)\\
    \Sigxxt& = -\frac{1}{9} \left\{ (-1 + t)^2 \left[ -9 + 2 (-1 + k) t \left(3 + (-3 + t) t\right) \left(3 + t \left[3 + (-3 + t) t\right]\right) \right] \right\}\\
    \Sigxvt& = \frac{1}{9} \left\{ (-1 + t) \left[ t \left(3 + (-3 + t) t\right) \left(9 + 8 t \left(3 + (-3 + t) t\right)\right) + k \left(9 - t \left(3 + (-3 + t) t\right) \left(9 + 8 t \left(3 + (-3 + t) t\right)\right)\right) \right] \right\}\\
    \Sigvvt& = 1 - \frac{8}{9} (-1 + k) t \left[ 3 + (-3 + t) t \right] \left\{ -3 + 4 t \left( 3 + (-3 + t) t \right) \right\}
  \end{align*}
  \begin{remark}
    The expressions above are too complicated. Hence, we provide the Python functional bracket in Appendix.\ref{appendix:cov-code} with general initial covariance and diffusion coefficient for easy copy-paste. The equations above are ones we used throughout this paper and feel free to play around with other hyperparameters.   
  \end{remark}
\subsection{Derivation from SDE to ODE for phase dynamics}
One can represent the dynamics in the form of,
\begin{equation}
    \begin{aligned}
        &\begin{bmatrix}
            \rd \rvx_t\\
            \rd \rvv_t
        \end{bmatrix}=\begin{bmatrix}
            \rvv_t\\
            \rmF_t
        \end{bmatrix}\rd t+\begin{bmatrix}
            \bzero&\bzero\\
            \bzero&g_t
        \end{bmatrix}\rd \rvw_t \quad \text{s.t}\quad \rvm_0:=\begin{bmatrix}
            \rvx_0\\
            \rvv_0
        \end{bmatrix}\sim\mathcal{N}(\bmu_0,\bSigma_0)
    \end{aligned}
\end{equation}
as 
\begin{align*}
    \rd \rvm_t = f(\rvm_t)\rd t + \rvg_t \rd \rvw_t
\end{align*}
And its corresponding Fokker-Planck Partial Differential Equation \cite{oksendal2003stochastic} reads,
\begin{align}
    \frac{\partial p_t}{\partial t}=-\sum_{d}\frac{\partial }{\partial \rvm_i} [f_i(\rvm,t)p_t(\rvm_t)]+\frac{1}{2}\sum_{d}\frac{\partial^2}{\partial \rvm_i \rvm_j}\left[\sum_{d}\rvg_t \rvg_t^\T p_t(\rvm_t)\right]
\end{align}
According to eq.(37) in \cite{song2020score}, One can rewrite such PDE,
\begin{align}
    \frac{\partial p_t}{\partial t}&=-\sum_{d}\frac{\partial}{\partial \rvm_i}\left\{f_i(\rvm_t,t)p_t(\rvm_t)-\frac{1}{2}\left[p(\rvm_t)\nabla_{\rvm}\cdot (\rvg_t\rvg_t^\T)+p(\rvm_t)\rvg_t \rvg_t^\T \nabla_{\rvm}\log p (\rvm_t)\right]\right\}\\
    &\text{due to the fact } \rvg_t\equiv \begin{bmatrix}
        \bzero&\bzero\\
        \bzero&g_t
    \end{bmatrix} \\
    &=-\sum_{d}\frac{\partial}{\partial \rvm_i}\left\{f_i(\rvm_t,t)p_t(\rvm_t)-\frac{1}{2}p(\rvm_t)\left[g_t^2 \nabla_{\rvv}\log p (\rvm_t)\right]\right\}
\end{align}
Then one can get the equivalent ODE:
\begin{align}
    \rd \rvm_t =\left[f(\rvm_t,t)-\frac{1}{2}g_t^2 \nabla_{\rvv}\log p(\rvm,t)\right]\rd t
\end{align}
\subsection{Decomposition of Covariance Matrix and representation of score}
Here we follow the procedure in \cite{dockhorn2021score}. Given the covariance matrix $\bSigmat$, the decomposition of the positive definite symmetric matrix is,
\begin{align}
    \bSigmat=\rmL_t^\T\rmL_t
\end{align}
Where,
\begin{align}
    \rmL_t=\begin{bmatrix}
        \Lxxt&\Lxvt\\
        \Lxvt&\Lvvt
    \end{bmatrix}=\begin{bmatrix}
        \sqrt{\Sigma_t^{xx}} &0\\
        \frac{\Sigma_t^{xv}}{\sqrt{\Sigma_t^{xx}}} &\sqrt{\frac{\Sigma_t^{xx}\Sigma_t^{vv}-\Sigma_t^{vv}}{\Sigma_t^{xx}}}\\
    \end{bmatrix}
\end{align}
 We borrow results from \cite{dockhorn2021score}, the score function reads,
    \begin{align*}
      \nabla_{\rvm}\log p(\rvm_t|\rvm_1) &=-\nabla_{\rvm_t}\frac{1}{2}(\rvm_t-\bmut)\bSigmat^{-1}(\rvm_t-\bmut)\\
                                                &=-\bSigmat^{-1}(\rvm_t-\bmut)\\
                                                &\text{Cholesky decomposition of $\bSigmat$}\\
                                                &=-\rmL^{-T}\rmL^{-1}(\rvm_t-\bmut)\\
                                                &=-\rmL^{-T}\epsilon\\
    \end{align*}    
    The form of $\rmL$ reads,
    \begin{align*}
      \rmL_{t}=\begin{bmatrix}
          \sqrt{\Sigma_t^{xx}}&0\\
          \frac{\Sigma_{t}^{xv}}{\sqrt{\Sigma_t^{xx}}}&\sqrt{\frac{\Sigma_{t}^{xx}\Sigma_{t}^{vv}-(\Sigma_t^{xv})^2}{\Sigma_t^{xx}}}
      \end{bmatrix}
    \end{align*}
    and the transpose inverse of $\rmL$ reads,
    \begin{align*}
      \rmL_{t}^{-T}=\begin{bmatrix}
          \frac{1}{\sqrt{(\Sigma_{t}^{xx}+\epsilon_{xx})}}&\frac{-\Sigma_t^{xv}}{\sqrt{(\Sigma_{t}^{xx})}\sqrt{(\Sigma_{t}^{xx})(\Sigma_{t}^{vv}+)-(\Sigma_{t}^{xv})^2}}\\
          0&\frac{\sqrt{\Sigma_t^{xx}}}{\sqrt{(\Sigma_{t}^{xx})(\Sigma_{t}^{vv})-(\Sigma_{t}^{xv})^2}}
      \end{bmatrix}
    \end{align*}    
  Hence, the score function reads,
  \begin{align*}
    \nabla_{\rvv}\log p(\rvm_t|\rvm_1) &=-\underbrace{\frac{\sqrt{\Sigma_t^{xx}}}{\sqrt{(\Sigma_{t}^{xx}+\epsilon_{xx})(\Sigma_{t}^{vv}+\epsilon_{vv})-(\Sigma_{t}^{xv})^2}}}_{\ell_t}\epso\\
  \end{align*}    

\subsection{Representation of acceleration $\rva_t$}
As been shown in Proposition.\ref{prop:phase-optimal-control}, the optimal control can be represented as,
\begin{align*}
    \rva_t^{*}&=g_t^2 P_{11}\left(\frac{\rvx_1-\rvx_t}{1-t}-\rvv_t\right)\\
            &=g_t^2 P_{11}\frac{\rvx_1}{1-t}-g_t^2 P_{11}\left(\frac{\rvx_t}{1-t}+\rvv_t\right)\\
            &=g_t^2 P_{11}\frac{\rvx_1}{1-t}-g_t^2 P_{11}\left(\frac{\muxt+\Lxxt \epsz}{1-t}+(\muvt+\Lxvt\epsz+\Lvvt\epso)\right)\\
            &=g_t^2 P_{11}\left[\left(\frac{\rvx_1-\muxt}{1-t}-\muvt\right)-\left(\frac{\Lxxt}{1-t}\epsz+\Lxvt\epsz+\Lvvt \epso\right)\right]\\
            &\text{solving eq.\ref{Appendix:mean-cov} we can get}:\muxt=\frac{1}{3}\rvx_1 t^2(t^2 - 4t + 6),\muvt=\frac{4t \rvx_1}{3}(t^2 - 3t + 3)\\
            & \text{Plug in} \rvx_t,\rvv_t\\
            &=g_t^2 P_{11}\left[\left(\frac{\rvx_1-\frac{1}{3}\rvx_1 t^2\left(6-4t+t^2\right)}{1-t}-\frac{4t \rvx_1}{3}(t^2 - 3t + 3)\right)-\left(\frac{\Lxxt}{1-t}\epsz+\Lxvt\epsz+\Lvvt \epso\right)\right]\\
            &=g_t^2 P_{11}\left[\left(\frac{(-t^4+4t^3-6t^2+3)}{3(1-t)}-\frac{4t}{3}(t^2 - 3t + 3)\right)\rvx_1-\left(\frac{\Lxxt}{1-t}\epsz+\Lxvt\epsz+\Lvvt \epso\right)\right]\\
            &=g_t^2 P_{11}\left[\left(\frac{-(t-1)(t^3-3t^2+3t+3)}{3(1-t)}-\frac{4t}{3}(t^2 - 3t + 3)\right)\rvx_1-\left(\frac{\Lxxt}{1-t}\epsz+\Lxvt\epsz+\Lvvt \epso\right)\right]\\
            &=g_t^2 P_{11}\left[\left(\frac{(t^3-3t^2+3t+3)}{3}-\frac{1}{3}(4t^3 - 12t^2 + 12t)\right)\rvx_1-\left(\frac{\Lxxt}{1-t}\epsz+\Lxvt\epsz+\Lvvt \epso\right)\right]\\
            &=g_t^2 P_{11}\left[(1-t)^3\rvx_1-\left(\frac{\Lxxt}{1-t}\epsz+\Lxvt\epsz+\Lvvt \epso\right)\right]\\
            &=4(1-t)^2\rvx_1+g_t^2 P_{11}\left(\frac{\Lxxt}{1-t}\epsz+\Lxvt\epsz+\Lvvt \epso\right)\\
\end{align*}
\subsection{Loss Reweight}
In practice, we use the following loss function
\begin{align}
    \mathcal{L}=\min_{\theta}\E_{t\in[0,1]}\E_{\rvx_1\sim \pdata}\E_{\rvm_t\sim p_t(\rvm_t|\rvx_1)}\lambda(t)\left[\norm{\rmF_t^{\theta}(\rvm_t,t;\theta)-\rmF_t(\rvm_t,t)}_2^2\right]\\
    \propto \min_{\theta}\E_{t\in[0,1]}\E_{\rvx_1\sim \pdata}\E_{\rvm_t\sim p_t(\rvm_t|\rvx_1)}\frac{1}{1-t}\left[\norm{\rmF_t^{\theta}(\rvm_t,t;\theta)-\rmF_t(\rvm_t,t)/\rvz_t}_2^2\right]
\end{align}
We admit that this might not be an optimal selection. The motivation behind this is simply increasing the weight of training when $t\rightarrow 1$ and normalize the label with normalizer $\rvz_t$.

\subsection{Normalizer of AGM-SDE and AGM-ODE}\label{Appendix:normalizer}
Since the optimal control term can be represented as,
\begin{align*}
    \rva^*(\rvm_t,t)=4\rvx_1(1-t)^2-g_t^2P_{11}\left[\left(\frac{L_t^{xx}}{1-t}+L_t^{xv}\right)\epsz+L_t^{vv}\epso\right].
\end{align*} 
Then we introduce the normalizer as 
\begin{align*}
    \rvz_{SDE}&=\sqrt{(4(1-t)^2\cdot\sigma_{data})^2+g_t^2 P_{11}\left[\left(\frac{L_t^{xx}}{1-t}+L_t^{xv}\right)^2+(L_t^{vv})^2\right]}\\
    \rvz_{ODE}&=\sqrt{(4(1-t)^2\cdot\sigma_{data})^2+g_t^2 P_{11}+g_t^2 P_{11}\left(\frac{L_t^{xx}}{1-t}+L_t^{xv}\right)^2+\left[\left(g_t^2 P_{11}L_t^{vv}-\frac{1}{2}g_t^2\ell_t\right)^2\right]}
\end{align*} 
Where $\ell:=\sqrt{\frac{\Sigxxt}{\Sigxxt\Sigvvt-(\Sigxvt)^2}}$

\subsection{Exponential Integrator Derivation}\label{Appendix:EI}
As suggested by \cite{zhang2022fast}, one can write the discretized dynamics as,
\begin{equation}
    \begin{aligned}
        \begin{bmatrix}
            \rvx_{t_{i+1}}\\
            \rvv_{t_{i+1}}
        \end{bmatrix}&=\Phi(t_{i+1},t_{i})\begin{bmatrix}
            \rvx_t\\
            \rvv_t
        \end{bmatrix}+\sum_{j=0}^{r}C_{i,j}\begin{bmatrix}
            \bzero\\
            \rvs_{\theta}(\rvm_{t_{i-j}},t_{i-j})
        \end{bmatrix}\\
        \text{Where} \ C_{i,j}&=\int_{t}^{t+\delta_t}\Phi(t+\delta_t,\tau)\begin{bmatrix}
            \bzero&\bzero\\
            \bzero&\rvz_\tau
        \end{bmatrix}\prod_{k\neq j}\left[\frac{\tau-t_{i-k}}{t_{i-j}-t_{i-k}}\right]\rd \tau, \quad 
        \Phi(t,s)=\begin{bmatrix}
            1 &t-s\\
            0 &1
        \end{bmatrix}
    \end{aligned}
\end{equation}
After plugging in the transition kernel $\Phi(t,s)$, one can easily obtain the results shown in (\ref{eq:EI-sampling}).

\begin{remark}
    In light of the momentum system, there are numerous methods for achieving high accuracy in its resolution. However, the practical performance in generative modeling remains untested. We recommend that readers consult the classical numerical physics \href{https://young.physics.ucsc.edu/115/leapfrog.pdf}{text book} or recent momentum dynamics solver \citep{pandey2023efficient,dockhorn2021score}.
\end{remark}

\subsection{Proof of Proposition.\ref{prop:x1}}\label{Appendix:x1}
The estimated data point $\rvx_1$ can be represented as
\begin{align}
    \tilde{\rvx}_1^{SDE}=\frac{(1-t)(\rmF_{t}^{\theta}+\rvv_t)}{g_t^{2}P_{11}}+\rvx_t,\ \  &\text{or} \quad \tilde{\rvx}^{ODE}_1=\frac{\rmF_t^{\theta}+g_t^2P_{11}(\alpha_t \rvx_t+\beta_t\rvv_t)}{4(t-1)^2+g_t^{2}P_{11}(\alpha_t \muxt+ \beta_t\muvt)}
\end{align}
for SDE and probablistic ODE dynamics respectively, and $\beta_t =\Lvvt+\frac{1}{2 P_{11}}$,$\alpha_t = \frac{(\frac{L_t^{xx}}{1-t}+L_t^{xv})- \beta_t L^{xv}_t}{L^{xx}_t}$.
\begin{proof}
    It is easy to derive the representation of $\rvx_1$ of the SDE due to the fact that the network is essentially estimating:
    \begin{align*}
        \rmF_t^{\theta}\approx g_t^2P_{11}\left(\frac{\rvx_1-\rvx_t}{1-t}-\rvv_t\right)\\
        \Leftrightarrow \rvx_1 \approx \frac{(1-t)(\rmF_{t}^{\theta}+\rvv_t)}{g_t^{2}P_{11}}+\rvx_t
    \end{align*}
It will become slightly more complicated for probabilistic ODE cases. We notice that 
\begin{align*}
    \rvm_t &=\bmut+\rmL \epsilon\\
    \Leftrightarrow \quad \rvx_t=\muxt+L_t^{xx}\epso, &\quad \rvv_t=\muvt+\Lxvt\epsz+\Lvvt\epso
\end{align*}
In probabilistic ODE case, the force term can be represented as,
\begin{align*}
    \rmF(\rvm_t,t)=4\rvx_1(1-t)^2-g_t^2P_{11}\left[\left(\frac{L_t^{xx}}{1-t}+L_t^{xv}\right)\epsz+L_t^{vv}\epso\right]-\frac{1}{2}g_t^2\ell \epso
\end{align*} 
In order to use linear combination of $\rvx_t$ and $\rvv_t$ to represent $\rmF$ one needs to match the stochastic term in $\rmF_t$ by using
\begin{align*}
    \alpha_t \Lxxt+\beta_t \Lxvt&=\underbrace{\frac{L_t^{xx}}{1-t}+L_t^{xv}}_{\hat{\zeta}_t},\\
    \beta_t \Lvvt &=\underbrace{\Lvvt+\frac{1}{2 P_{11}}}_{\zeta_t}.
\end{align*} 
The solution can be obtained by:
\begin{align*}
    \beta_t &= \frac{\zeta_t}{\Lvvt}\\
    \alpha_t&=\frac{\hat{\zeta}_t-\beta_t \Lxvt}{\Lxxt}
\end{align*} 

By subsitute it back to $\rmF_t$, one can get:
\begin{align*}
    \rmF(\rvm_t,t)&=4\rvx_1(1-t)^2-g_t^2P_{11}\left[\alpha_t (\rvx_t-\muxt) + \beta_t (\rvv_t-\muvt)\right]\\
                &=\left[4(1-t)^2+g_t^2P_{11}(\alpha_t \muxt+\beta_t \muvt)\right]\rvx_1-g_t^2P_{11}\left[\alpha_t \rvx_t + \beta_t \rvv_t\right]\\
                \Leftrightarrow \rvx_1 &=\frac{\rmF_t^{\theta}+g_t^2P_{11}(\alpha_t \rvx_t+\beta_t\rvv_t)}{4(t-1)^2+g_t^{2}P_{11}(\alpha_t \muxt+ \beta_t\muvt)}
\end{align*} 
\end{proof}

\section{Experimental Details}\label{Appendix:exp}
\textbf{Training:} We stick with hyperparameters introduced in the section.\ref{Sec:exp}. We use AdamW\citep{loshchilov2017decoupled} as our optimizer and Exponential Moving Averaging with the exponential decay rate of 0.9999. We use 8 $\times$ Nvidia A100 GPU for all experiments. For further, training setup, please refer to Table.\ref{appendix:train-setup}.
\begin{table}[H]
    \caption{Additional experimental details}\label{appendix:train-setup}
    \begin{center}
        \begin{tabular}{cccccc}
            \toprule
            dataset &  Training Iter&Learning rate&Batch Size&network architecture  \\[1pt]
            \hline
            toy             &0.05M&1e-3&1024& ResNet\citep{dockhorn2021score}   \\[1pt]
            CIFAR-10         &0.5M&1e-3&512 & NCSN++\citep{karras2022elucidating}   \\[1pt]
            AFHQv2          &0.5M&1e-3&512 & NCSN++\citep{karras2022elucidating}   \\[1pt]
            ImageNet-64          &1.6M&2e-4&512 & ADM\citep{dhariwal2021diffusion}   \\[1pt]
            \bottomrule
        \end{tabular}
    \end{center}
\end{table}
\textbf{Sampling:} For Exponential Integrator, we choose the multistep order $w=2$ consistently for all experiments. Different from previous work \citep{dockhorn2021score,karras2022elucidating,zhang2023improved}, we use quadratic timesteps scheme with $\kappa=2$:
\begin{align*}
    t_i=\left(\frac{N-i}{N}t_0^{\frac{1}{\kappa}}+\frac{i}{N}t_N^{\frac{1}{\kappa}}\right)^{\kappa}
\end{align*}
Which is opposite to the classical DM. Namely, the time discretization will get larger when the dynamics is propagated close to data. For numerical stability, we use $t_0=1E-5$ for all experiments. For $NFE=5$, we use $t_N=0.5$ and $NFE=10$, $T_N=0.7$. For the rest of the sampling, we use $t_N=0.999$.

Due to the fact that EDM\citep{karras2022elucidating} is using second-order ODE solver, in practice, we allow it to have an extra one NFE as reported for all the tables.

\subsection{Code Example for Covariance}\label{appendix:cov-code}
We will abuse the notation in this coding section. Here we provide the example code for computing the covariance matrix. Here we consider the general case where $\bSigma_0:=\begin{bmatrix}
    m&-k\sqrt{mn}\\
    -k\sqrt{mn}&n\\ 
\end{bmatrix}$ and the diffusion coefficient is $g(t):=p(tt-t)$ where $p$ is the scaling coefficient and $tt$ is the damping coefficient.
\begin{verbatim}
    def Sigmaxx(t,p,tt,m,n):
        return  \
        (t - 1)**2*(30*m*(t**3 - 3*t**2 + 3*t + 3)**2\
        - 60*p**2*(t - 1)**3*torch.log(1 - t) \
        - t*(60*k*np.sqrt(m*n)*(t**5 - 6*t**4 + 15*t**3 - 15*t**2 + 9)\
        - 30*n*t*(t**2 - 3*t + 3)**2 + p**2*(t**5*(6*tt**2 + 3*tt + 1) \
        - 6*t**4*(6*tt**2 + 3*tt + 1)\
        + 15*t**3*(6*tt**2 + 3*tt + 1)\
        - 10*t**2*(9*tt**2 + 11) + 150*t - 60)))/270
    
    def Sigmaxv(t,p,tt,m,n):
        return  \
        (1/270 - t/270)*(30*k*np.sqrt(m*n)*(8*t**6 - 48*t**5\
        + 120*t**4 - 135*t**3 + 45*t**2 + 27*t - 9) +\
        150*p**2*(t - 1)**3*torch.log(1 - t)\
        + t*(-120*m*(t**5 - 6*t**4 + 15*t**3 - 15*t**2 + 9)\
        - 30*n*(4*t**5 - 24*t**4 + 60*t**3 - 75*t**2 + 45*t - 9)\
        + p**2*(4*t**5*(6*tt**2 + 3*tt + 1) - 24*t**4*(6*tt**2 + 3*tt + 1)\
        + 60*t**3*(6*tt**2 + 3*tt + 1) - 5*t**2*(81*tt**2 + 18*tt + 55)\
        + 15*t*(9*tt**2 + 25) - 150)))
    
    def Sigmavv(t,p,tt,m,n):
        return  \
        n*(-4*t**3 + 12*t**2 - 12*t + 3)**2/9\
        - 8*p**2*(t - 1)**3*torch.log(1 - t)/9\
        + t*(-120*k*np.sqrt(m*n)*(4*t**5 - 24*t**4 + 60*t**3\
        - 75*t**2 + 45*t - 9) + 240*m*t*(t**2 - 3*t + 3)**2 \
        + p**2*(-8*t**5*(6*tt**2 + 3*tt + 1) + 48*t**4*(6*tt**2 + 3*tt + 1)\
        - 120*t**3*(6*tt**2 + 3*tt + 1) + 5*t**2*(180*tt**2 + 72*tt + 53)\
        - 15*t*(36*tt**2 + 9*tt + 20) + 135*tt**2 + 120))/135
    \end{verbatim}

\section{Conditional Generation Details}\label{Appendix:cond-gen}
Here we provide the details of conditional generation details.
\subsection{Storke Based Generation}
For stroke-based generation, we provide two types of conditional generation.

\textbf{initial Velocity (IV)}:Please refer to section.\ref{Sec:exp}.\\
\textbf{Dynamics Velocity (dyn-V)}: Since the mean and variance of velocity and position are available, one can specify the velocity which is valid. In this case, we can set the velocity as
\begin{align}
    v_t=\mu_t^{v_t|x_t}+\Sigma^{v_t|x_t}_t\epsilon
\end{align} 
In which,
\begin{align}
    \mu_t^{v_t|x_t}&=\mu_t^{v}+\frac{\Sigxvt}{\Sigxxt}(\rvx_t-\muxt) \label{eq:cond_muv}\\
    \Sigma_t^{v_t|x_t}&=\Sigvvt-\frac{\Sigxvt^2}{\Sigxxt}
\end{align} 
when $t\leq c$. The $c$ is the guidance length. We typically set it to be $c=0.25$.
\subsection{Inpainting}
In the inpainting case, we apply a similar strategy as \textbf{dyn-V}. Specifically, in this case, the $\tilde{\rvx}_1$ will be represented as:
\begin{align}
    \hat{\rvx}_1 &:= \text{MASK}\odot\mu_t^{x}+(1-\text{MASK})\odot \tilde{\rvx}_1
\end{align} 
where $\text{MASK}$ represents the mask matrix which zero out the pixel of the original image. Such $\hat{\rvx}_1$ will serve as the source to estimate $\mu_t^{x}$ in eq.\ref{eq:cond_muv}.
\subsection{inpainting Based Generation}
For stroke-based generation, we provide two types of conditional generation. 

\section{Ablation Study of Stoke-Based Conditional Generation}
In order to investigate the diversity and faithfulness of stoke-based conditional generation, we conduct the ablation study with respect to the hyperparameter $\xi$.

\begin{figure}[H]
    \includegraphics[width=\textwidth]{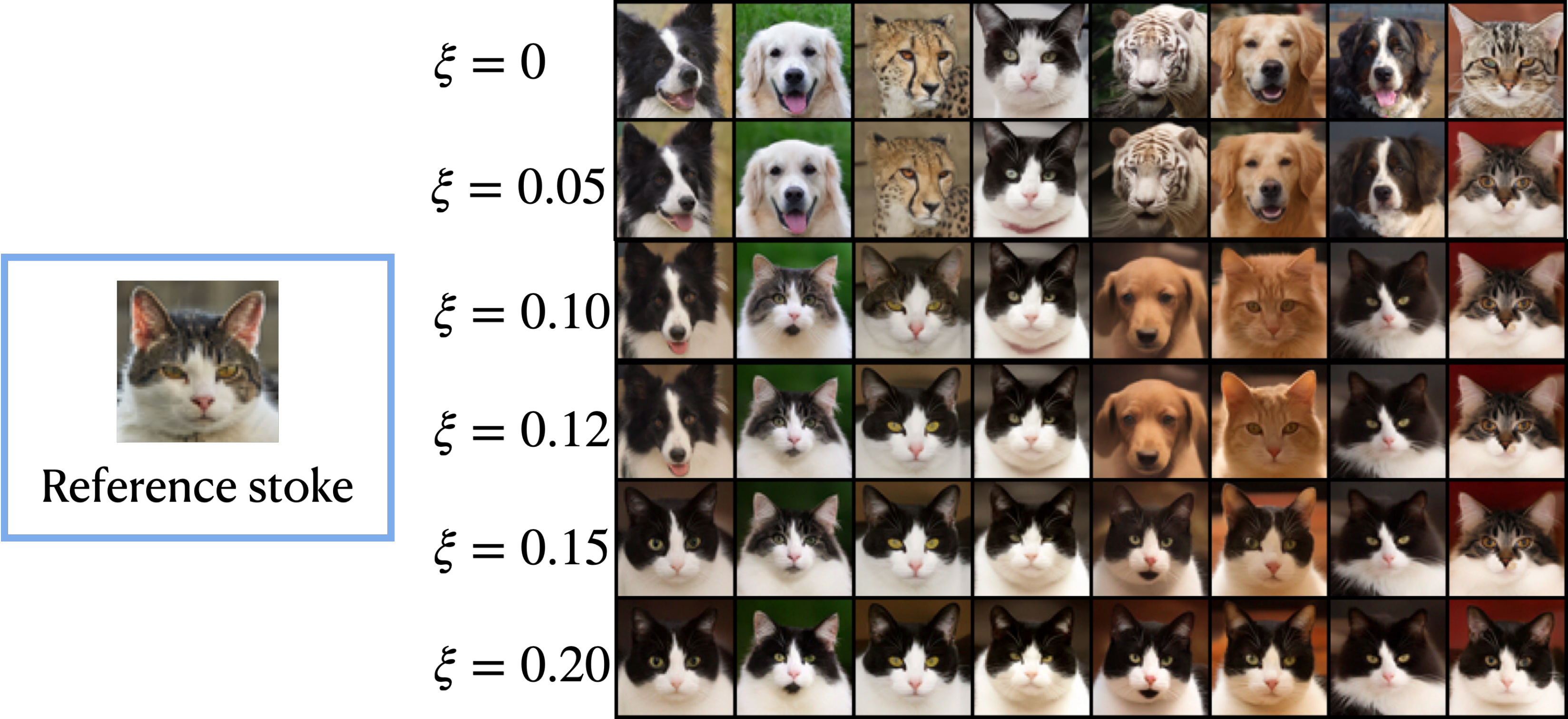}
    \caption{Ablation study for the stoke-based conditional generation. When $\xi=0$, it is unconditional generation. Notably, the diversity of the generation will decay when we increase $\xi$. In order to achieve a balance between faithfulness and diversity, one needs to tune the hyperparameter $\xi$.}
    \label{fig:ablation-v}
\end{figure}

\section{Additional Figures}\label{Appendix:figs}
We demonstrate the samples for different datasets with varying NFE.
\subsection{Toy dataset compared with CLD}
\begin{figure}[H]
    \includegraphics[width=\textwidth]{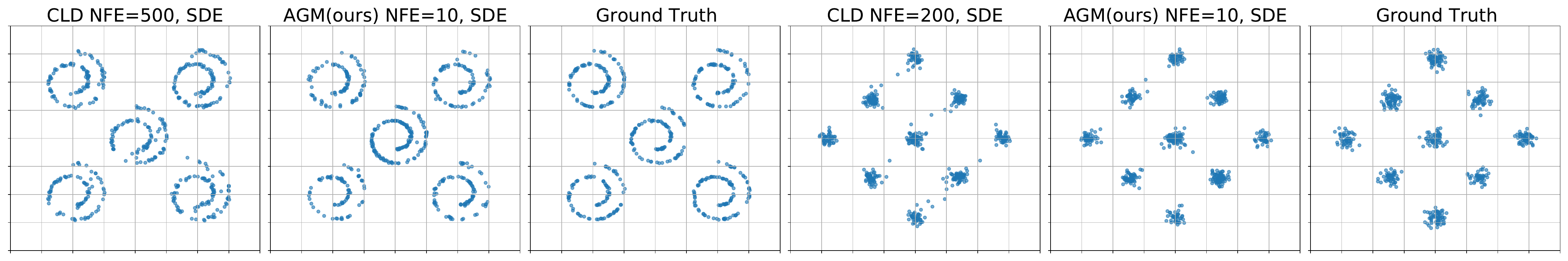}
    \caption{The comparison with CLD\citep{dockhorn2021score} using same network and stochastic sampler SSS, for Multi-Swiss-Roll and Mixture of Gaussian datasets. We achieve visually better results with one order less NFEs.}
    \label{fig:cld-am-toy}
    \vskip -0.05in
\end{figure}
\newpage
\subsection{AFHQv2 Inpainting Generation}
\begin{figure}[H]
    \includegraphics[width=\textwidth]{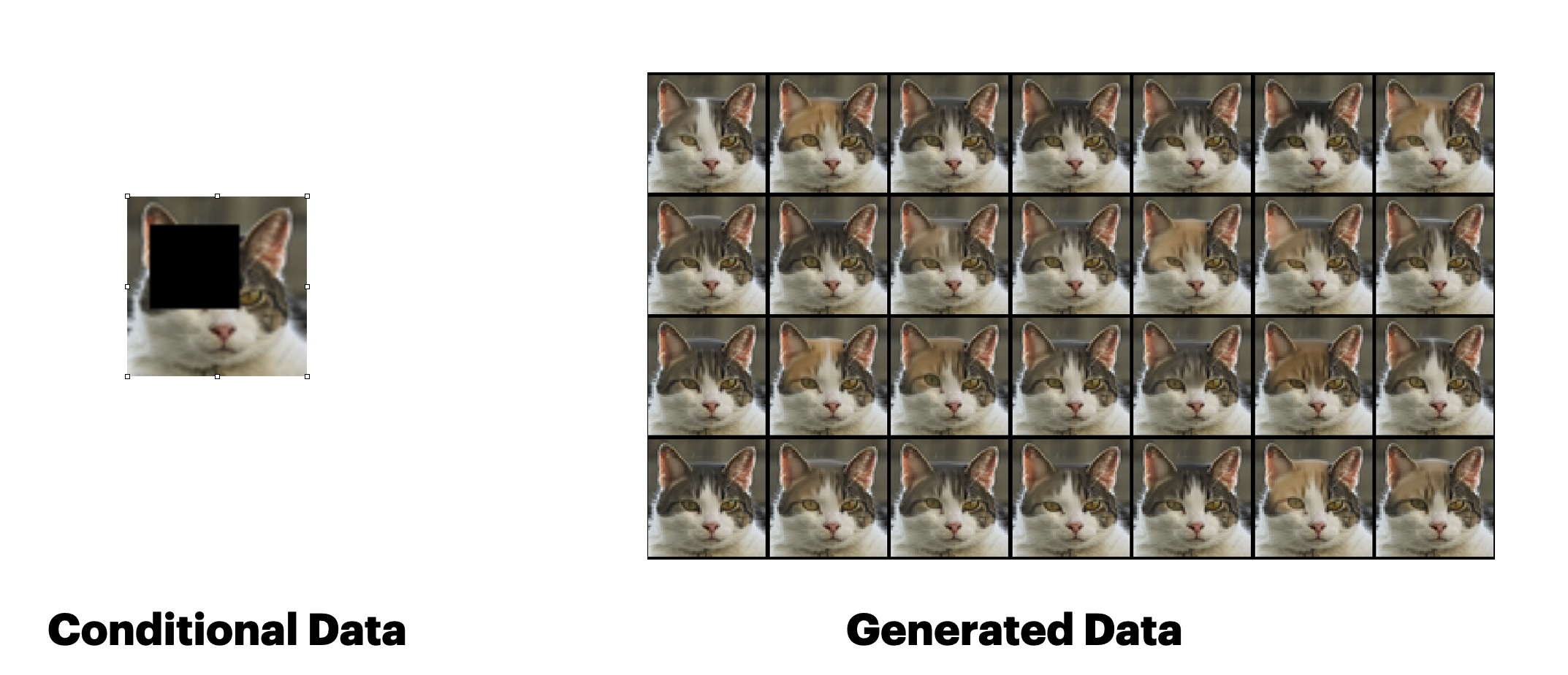}
    \caption{AGM-ODE Uncured inpainting generation}
    \label{fig:cifar10-nfe5}
\end{figure}
\subsection{AFHQv2 Stroke Based Generation}
\begin{figure}[H]
    \includegraphics[width=\textwidth]{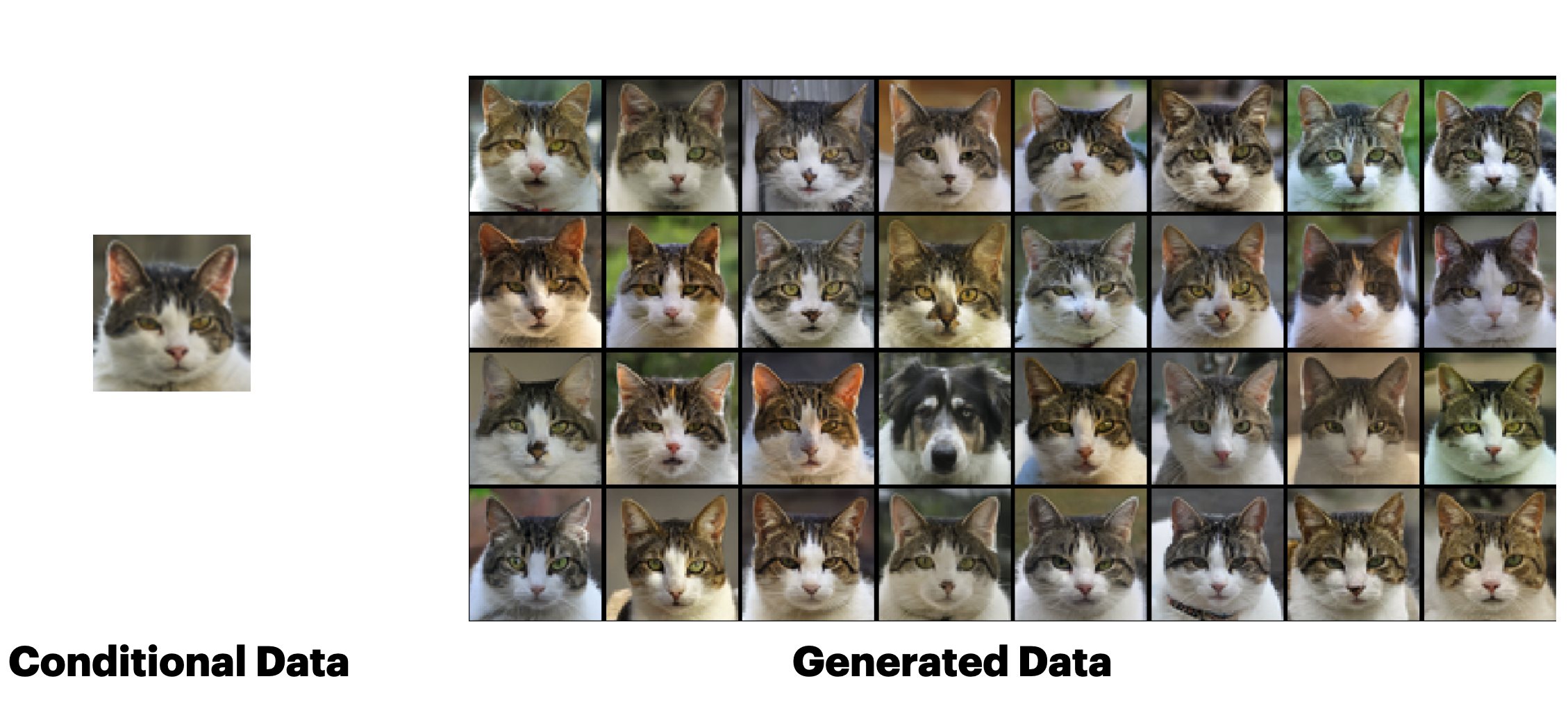}
    \caption{AGM-ODE Uncured stroke-based generation}
    \label{fig:cifar10-nfe5}
\end{figure}
\subsection{CIFAR-10}
\begin{figure}[H]
    \includegraphics[width=\textwidth]{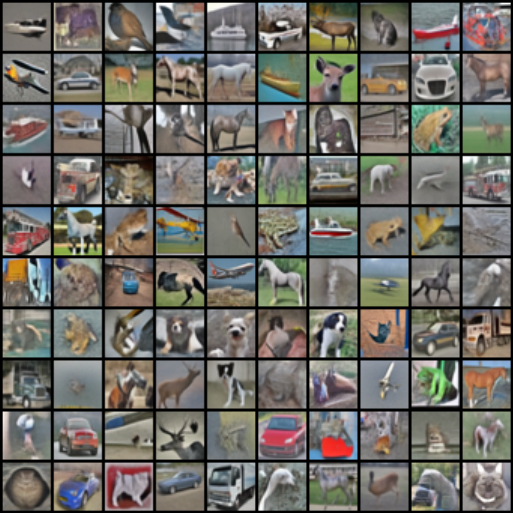}
    \caption{AGM-ODE Uncurated CIFAR-10 samples with NFE=5}
    \label{fig:cifar10-nfe5}
\end{figure}

\begin{figure}[H]
    \includegraphics[width=\textwidth]{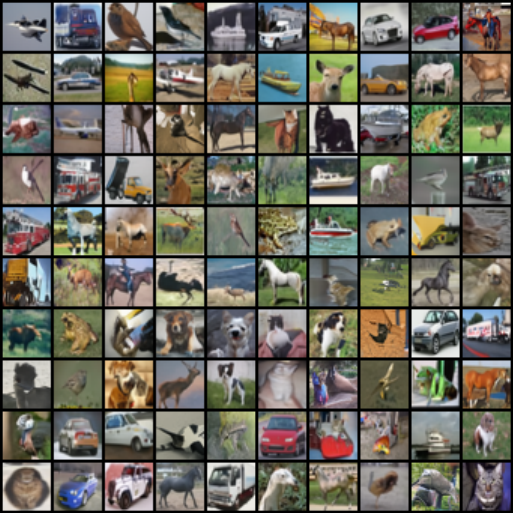}
    \caption{AGM-ODE Uncurated CIFAR-10 samples with NFE=10}
    \label{fig:cifar10-nfe10}
\end{figure}

\begin{figure}[H]
    \includegraphics[width=\textwidth]{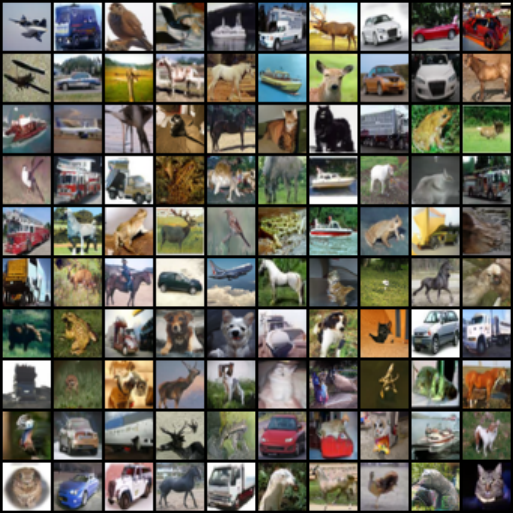}
    \caption{AGM-ODE Uncurated CIFAR-10 samples with NFE=20}
    \label{fig:cifar10-nfe20}
\end{figure}

\begin{figure}[H]
    \includegraphics[width=\textwidth]{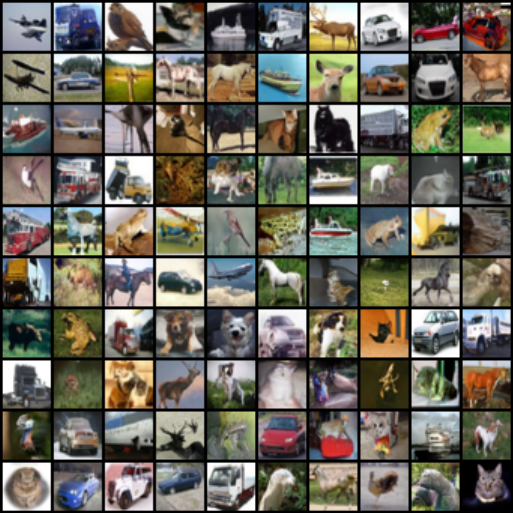}
    \caption{AGM-ODE Uncurated CIFAR-10 samples with NFE=50}
    \label{fig:cifar10-nfe50}
\end{figure}

\subsection{AFHQv2}
\begin{figure}[H]
    \includegraphics[width=\textwidth]{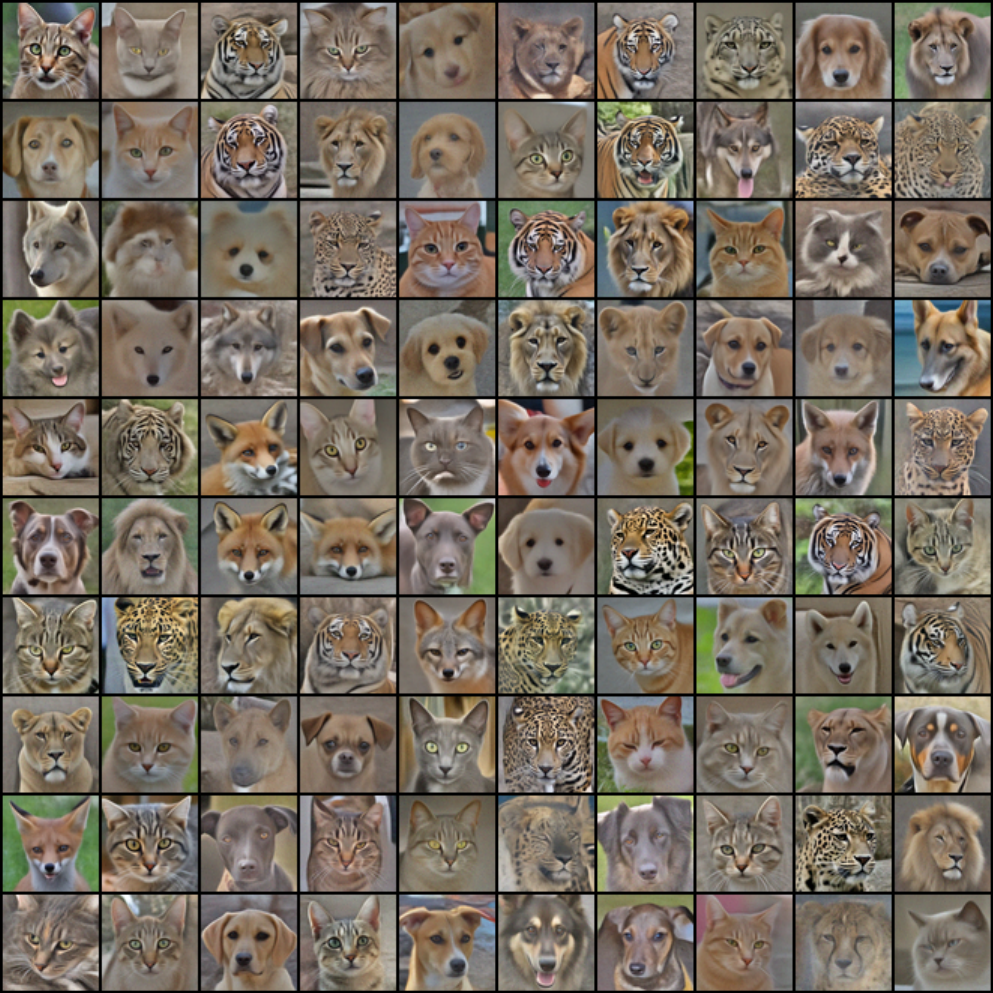}
    \caption{AGM-ODE Uncurated AFHQv2 samples with NFE=5}
    \label{fig:afhq-nfe5}
\end{figure}

\begin{figure}[H]
    \includegraphics[width=\textwidth]{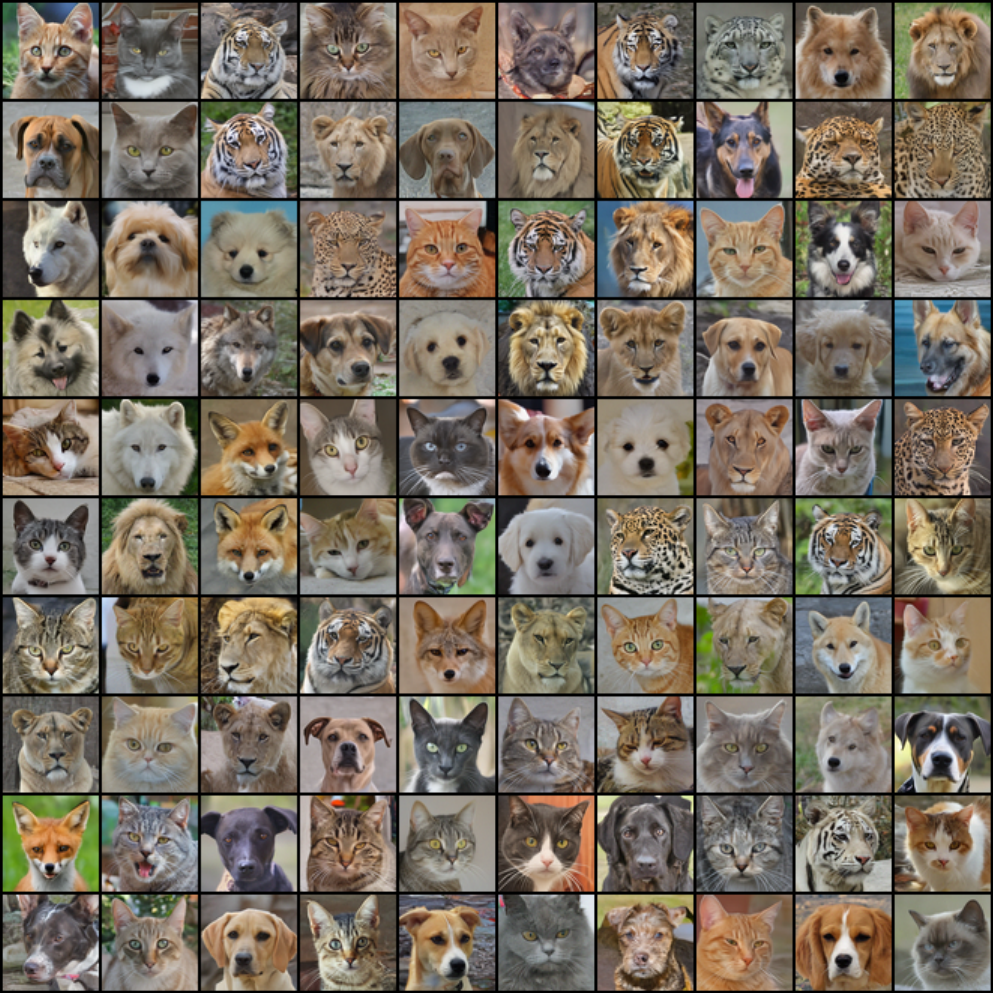}
    \caption{AGM-ODE Uncurated AFHQv2 samples with NFE=10}
    \label{fig:afhq-nfe10}
\end{figure}

\begin{figure}[H]
    \includegraphics[width=\textwidth]{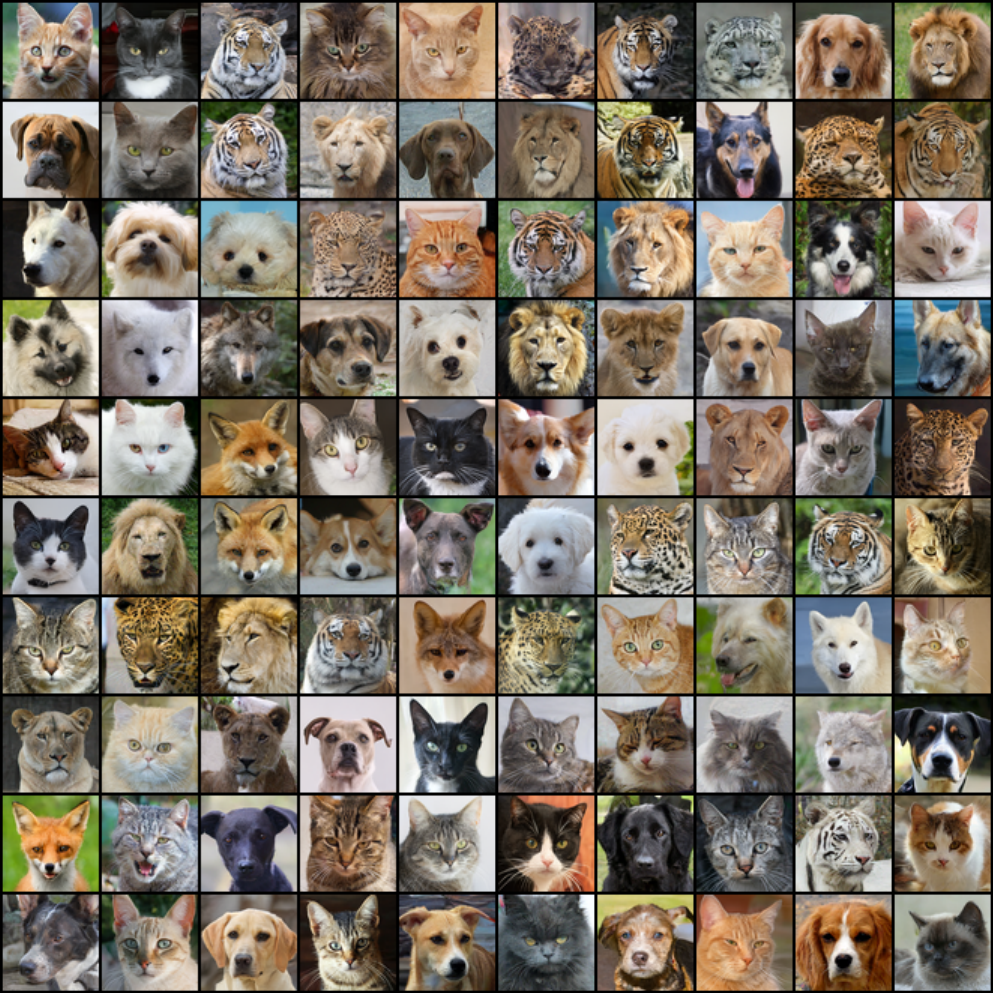}
    \caption{AGM-ODE Uncurated AFHQv2 samples with NFE=20}
    \label{fig:afhq-nfe20}
\end{figure}

\begin{figure}[H]
    \includegraphics[width=\textwidth]{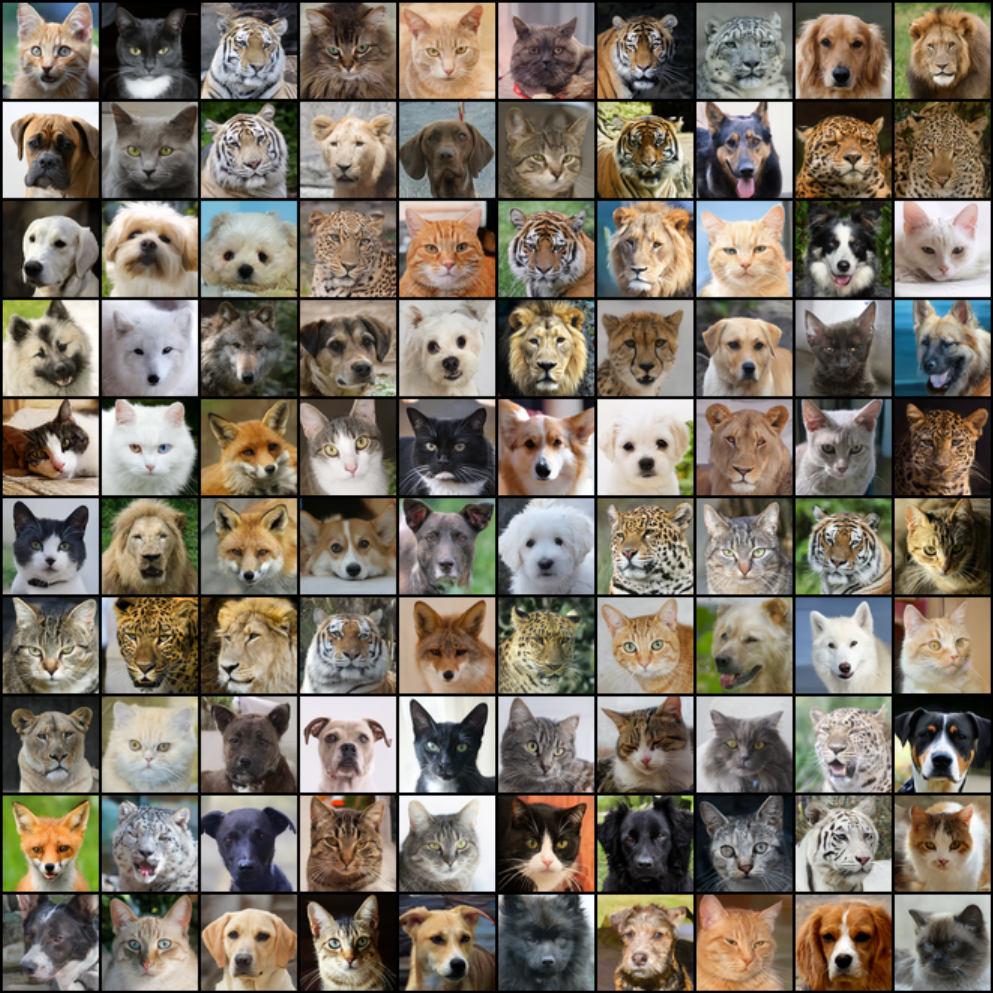}
    \caption{AGM-ODE Uncurated AFHQv2 samples with NFE=50}
    \label{fig:afhq-nfe50}
\end{figure}

\subsection{Imagenet-64}
\begin{figure}[H]
    \includegraphics[width=\textwidth]{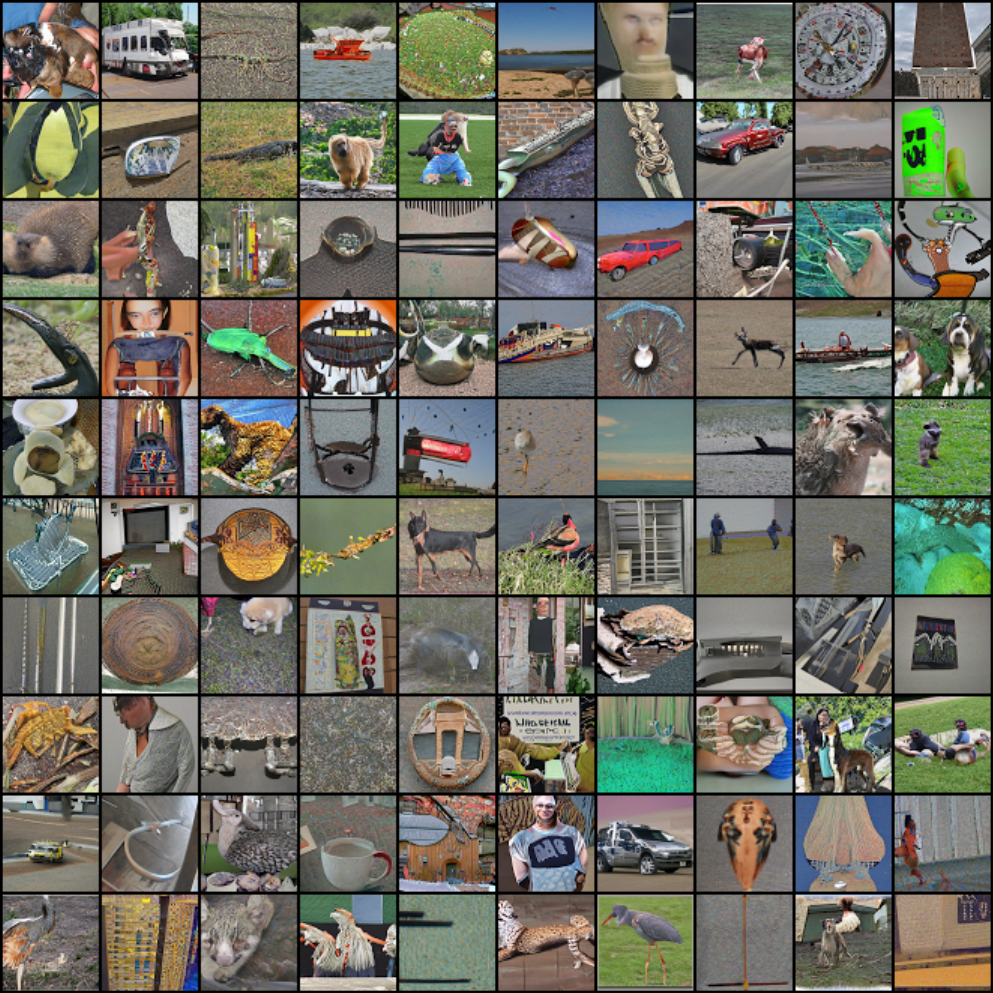}
    \caption{AGM-ODE Uncurated Imagenet-64 samples with NFE=10}
    \label{fig:afhq-nfe10}
\end{figure}

\begin{figure}[H]
    \includegraphics[width=\textwidth]{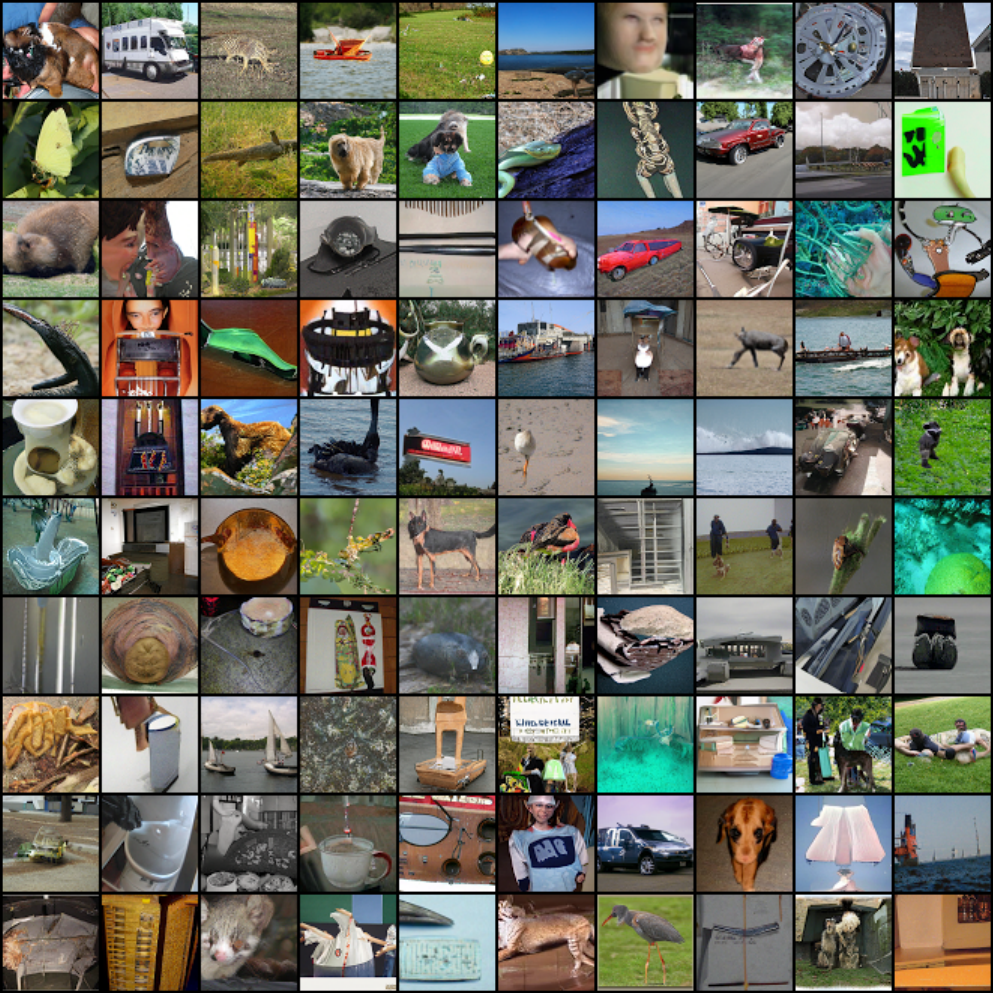}
    \caption{AGM-ODE Uncurated Imagenet-64 samples with NFE=20}
    \label{fig:afhq-nfe20}
\end{figure}

\begin{figure}[H]
    \includegraphics[width=\textwidth]{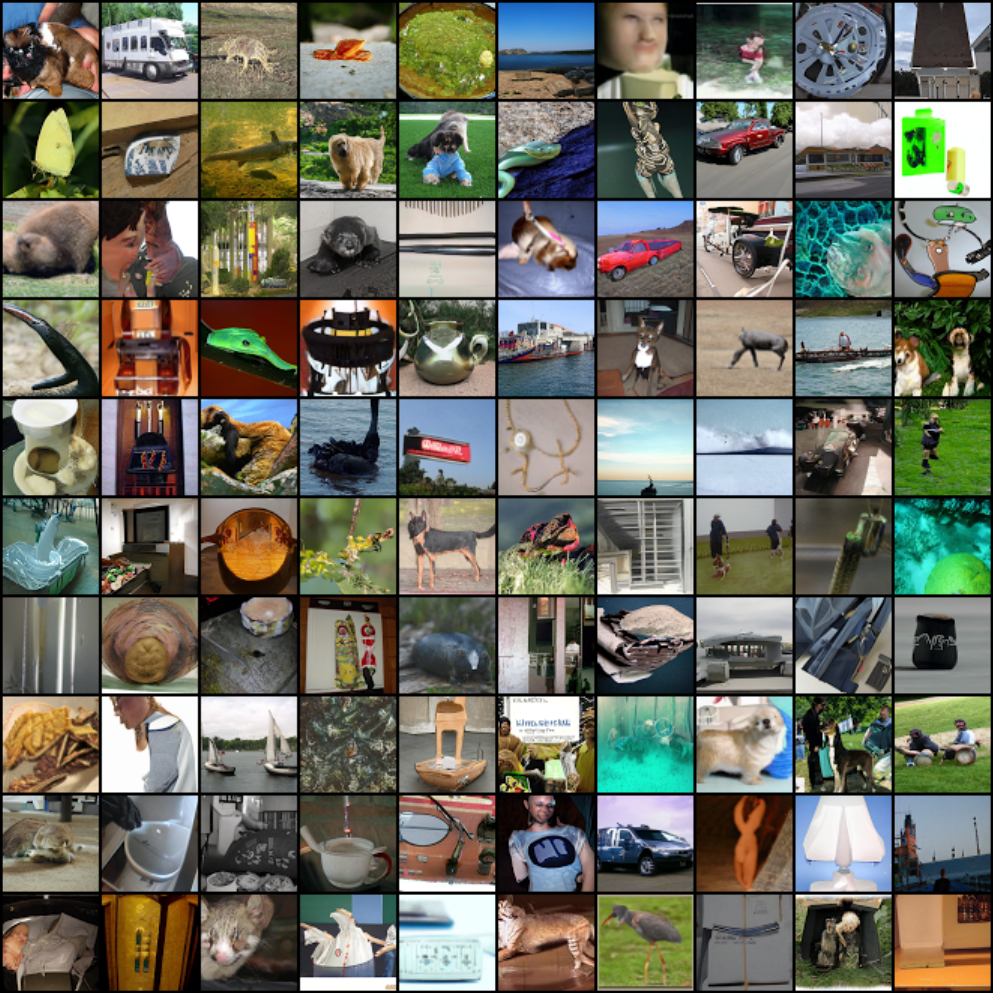}
    \caption{AGM-ODE Uncurated Imagenet-64 samples with NFE=50}
    \label{fig:afhq-nfe50}
\end{figure}
\end{document}